\title{Reasoning over Uncertain Text by Generative Large Language Models}
\author {
    Aliakbar Nafar\textsuperscript{\rm 1},
    Kristen Brent Venable\textsuperscript{\rm 2,\rm 3},
    Parisa Kordjamshidi\textsuperscript{\rm 1}
}
\newbox\pythonCalcBox
\newbox\monteCarloBox
\newbox\neuroSymbolicBox
\theoremstyle{plain} 
\newtheorem{theorem}{Theorem}
\newtheorem{lemma}[theorem]{Lemma}
\newcolumntype{P}[1]{>{\centering\arraybackslash}p{#1}}
\small\color{gray}, 
\newcommand{\HeatCell}[1]{%
    \pgfmathsetmacro{\PercentColor}{(#1-0)*0.35}
    \xdef\PercentColor{\PercentColor}
    \cellcolor{black!\PercentColor!white}#1
}
\begin{document}

\maketitle
\begin{abstract}
This paper considers the challenges Large Language Models (LLMs) face when reasoning over text that includes information involving uncertainty explicitly quantified via probability values. This type of reasoning is relevant to a variety of contexts ranging from everyday conversations to medical decision-making. Despite improvements in the mathematical reasoning capabilities of LLMs, they still exhibit significant difficulties when it comes to probabilistic reasoning. To deal with this problem, we introduce the Bayesian Linguistic Inference Dataset (BLInD), a new dataset specifically designed to test the probabilistic reasoning capabilities of LLMs. We use BLInD to find out the limitations of LLMs for tasks involving probabilistic reasoning. In addition, we present several prompting strategies that map the problem to different formal representations, including Python code, probabilistic algorithms, and probabilistic logical programming. We conclude by providing an evaluation of our methods on BLInD and an adaptation of a causal reasoning question-answering dataset. Our empirical results highlight the effectiveness of our proposed strategies for multiple LLMs.
\end{abstract}

\section{Introduction}

Uncertainty in text is communicated in many contexts,  ranging from everyday conversations to domain-specific documents, such as those with medical focus~\cite{707988ff-79d1-31c9-afdb-b408b62c1a0f,LANDMARK201554}. Processing this uncertain information is critical. For example, uncertainty in text has been shown to significantly affect decision-making in the biomedical domain~\cite{Poggi2019UncertainWU}. Reasoning over uncertain text is also closely related to rational reasoning, e.g., if the probabilities of events A and B are low, the probability of both happening simultaneously should also be low. As a result, it is essential for language models to be able to use text with uncertainty and perform inference based on it. While the human intuitive approach to probabilistic reasoning often aligns with Bayesian Rationalism~\cite{10.1093/acprof:oso/9780198524496.001.0001,oaksford_chater_2009}, humans usually do not explicitly calculate the probabilities of outcomes. Still, probabilistic modeling using Bayesian Networks offers a robust computational approach for dealing with uncertainty. Thus, we tackle the challenge of enabling LLMs to conduct probabilistic reasoning by mapping uncertainty expressed through language to a Bayesian Network. This approach resembles other strategies for enabling mathematical reasoning over text, such as with math word problems (MWPs)~\cite{cobbe2021training,kim-etal-2023-aint}. The common theme of our problem formulation and MWPs is that a formal problem is extracted from the text and solved using external tools~\cite{ijcai2017-556,heyueya2023solving}.

\begin{figure}
    \centering
    \includegraphics[width=0.95\linewidth]{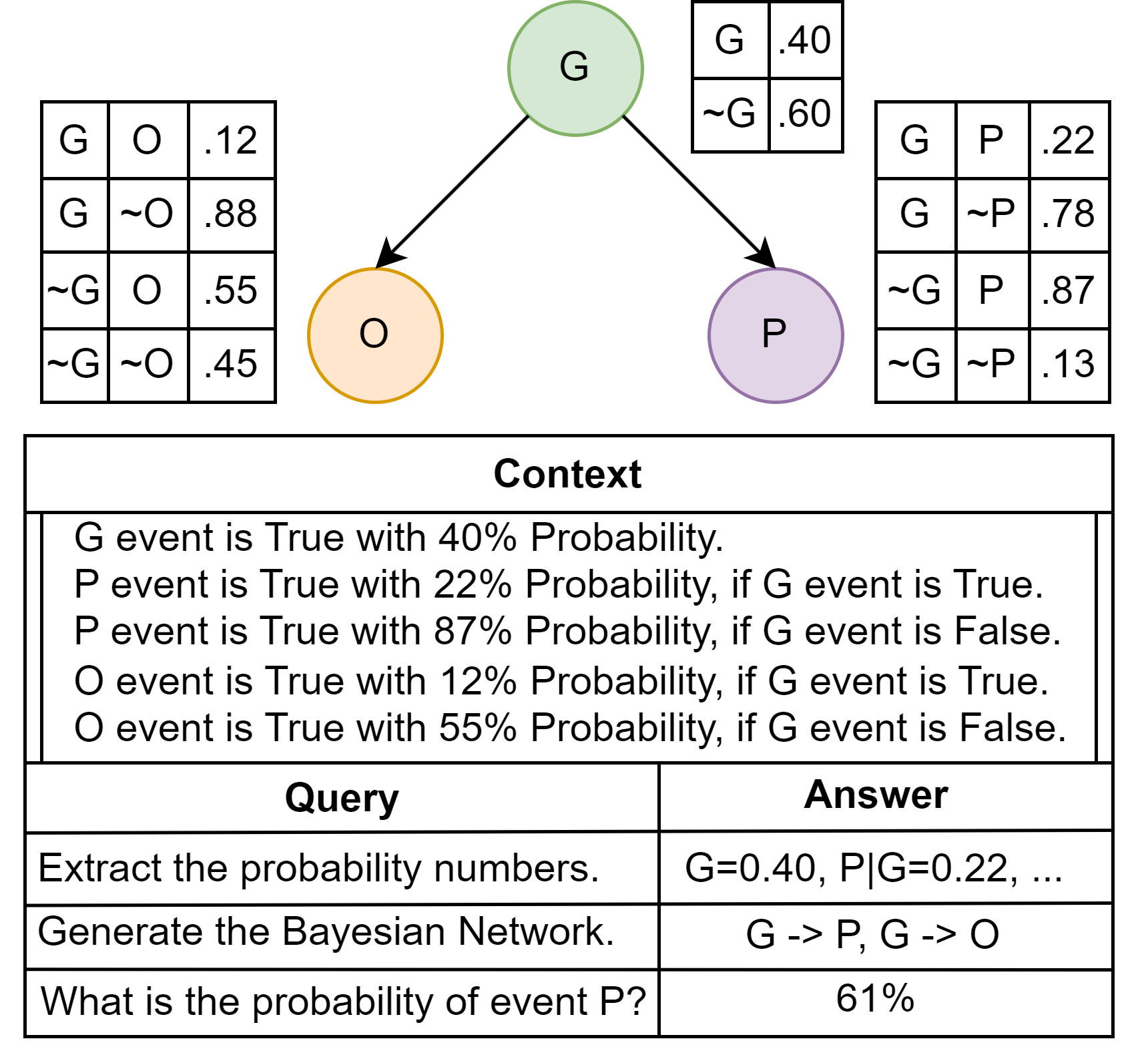}
    \caption{An example from the BLInD dataset including an underlying Bayesian network, its textual description, and probabilistic queries in natural language form.}
    \label{fig:dataset_example}
\end{figure}

First-generation LLMs were shown to struggle with mathematical reasoning and fail in answering even simple questions (e.g., about summation~\cite{mishra-etal-2022-numglue}). With the advent of newer generations of LLMs, their mathematical reasoning capability improved significantly, with GPT4 achieving 92\%~\cite{openai2023GPT4} and Gemini achieving 94.4\%~\cite{google2023google} accuracy on the Grade School Math (GSM8K) dataset~\cite{cobbe2021training}. These results have misled to the belief that LLMs are now proficient in mathematical reasoning. However, the LLMs' performance on math problems varies significantly depending on the question types~\cite{kim-etal-2023-aint}. Here, we confirm this latter result by showing that LLMs still struggle with the essential task of probabilistic reasoning over text. Furthermore, we illustrate how, depending on the LLM, different limitations and weaknesses hinder their ability to solve these problems. Simply utilizing Chain of Thought~\cite{DBLP:journals/corr/abs-2201-11903} or Python code is not always effective~\cite{shi2022stepgame,kim-etal-2023-aint}. These observations support the design of customized solutions for each problem and model. 

We focus on Bayesian inference over text and introduce a new dataset, Bayesian Linguistic Inference Dataset (BLInD), to evaluate and improve LLMs' probabilistic reasoning. BLInD instances have up to 10 interconnected random variables used to answer a probabilistic query over them. Figure~\ref{fig:dataset_example} shows a BLInD example, with the Bayesian Network and conditional probability tables at the top. The corresponding natural language explanation, which is given in input to the language models, is shown below. Given the textual context, the models are asked to answer probabilistic queries such as "What is the probability of event P?". 

We design prompts that decompose this complex problem into extracting the probability values and generating the dependency graph prior to probabilistic inference. We investigate solutions that include the above extractions as well as a mapping to symbolic solvers such as pure Python code, probabilistic inference algorithms, and probabilistic logical formalisms. Ultimately, we test our methods on our new challenging dataset and on an adapted version of a causal reasoning question-answering dataset, CLADDER~\cite{jin2023CLADDER}, further solidifying our results.

In summary, our contributions are as follows: 1) Creating a \textbf{new dataset (BLInD)}\footnote{The code and dataset are available at \url{github.com/HLR/BLInD}.} designed for reasoning over text with uncertainty explicitly quantified as probabilities; 2) \textbf{Analyzing the capabilities} of LLMs in solving the complex probabilistic reasoning problems contained in BLInD, highlighting their limitations; 3) Designing \textbf{innovative prompt engineering and in-context learning} techniques which leverage mapping to Python code, to inference algorithms, and to a probabilistic logical formalism, leading to improvements in performance across  multiple LLMs, proprietary and open-source; 4) \textbf{Evaluating} the proposed techniques on our new dataset and an adapted existing benchmark.

\section{Related Work}

A few prior works have explored question-answering (QA) involving probabilistic rules. RuleBERT~\cite{saeed-etal-2021-rulebert} and \cite{nafar-etal-2024-teaching} mainly evaluate BERT-based models~\cite{devlin-etal-2019-bert}  by fine-tuning them. They use a simple independence structure instead of dealing with arbitrary Bayesian Networks. Their queries are limited to asking the probability of a single variable, and their closed world assumption (CWA) assigns a probability of zero to any event with unspecified probability. In contrast, our queries involve any joint and marginal computation and do not use the CWA. CLADDER~\cite{jin2023CLADDER} creates a dataset with probabilistic contexts, but it is mainly designed to test the causal reasoning capabilities of LLMs with in-context learning and structured prompts. They use a limited number of variables (less than 5), their task setting is limited to binary QA, and their solution is to map the natural language text to a causal reasoning formalism. An adaptation of this dataset for mapping to probabilistic reasoning applies to our problem setting and is used in our experiments. 

Looking at reasoning over uncertain text as a form of the math word problem, \cite{ijcai2017-556,suster-etal-2021-mapping} solve simple probability word problems from introductory mathematics textbooks. However, in most questions, the probabilities are not directly given in the context, and the inference does not necessarily require mapping to Bayesian Networks. These works utilize either classical NLP parsers or fine-tuned LMs instead of our in-context prompting methods. NumGLUE~\cite{mishra-etal-2022-numglue} is the first work that analyses Pre-trained and Large Language Models for mathematical reasoning. But, it is limited to questions that require simple arithmetic reasoning. ~\cite{bubeck2023sparks,frieder2023mathematical,kim-etal-2023-aint} looks at a broader range of math questions for analyzing LLM's reasoning capabilities. However, none of these works include Bayesian probabilistic questions with complex structures.

In our solutions, we use neuro-symbolic methods to reason over uncertain text. Neuro-symbolic techniques have been used in related research to solve various NLP tasks by integration of symbolic reasoning during training or inference~\cite{rajaby-faghihi-etal-2021-domiknows, domiknows_glucons} though not for probabilistic reasoning over uncertain text. In a slightly related work, ThinkSum~\cite{ozturkler-etal-2023-thinksum} uses probabilistic reasoning by calculating the likelihood of the LLM generating each possible answer and then aggregating these token probabilities. This approach is applied to usual question answering problems that output the final crisp answers. This is fundamentally different from our work that interprets the uncertainty measures that are expressed explicitly in the text and reasons over them to infer the probability of a query.

\section{Problem Definition}

The input to the QA task is a textual \textit{context} paired with a probabilistic \textit{query} in a textual form which we refer to as the \textit{question} throughout the paper. The context comprises sentences that describe the probability of random events, which are binary variables, or the conditional probabilities of events. Figure~\ref{fig:dataset_example} shows a context with five sentences describing the probabilities of random events G, P, and O. The query can be any question that probes the probabilities of these events, such as ``What is the probability of G being true and P being false given that O is false?''. The output, which is the probability of the query, is a real number that ranges from $0.0$ to $1.0$.

\section{BLInD Generation}
\label{sec:dataset}

We introduce the Bayesian Language Inference Dataset (BLInD) to investigate the ability of LLMs to perform probabilistic reasoning. Each example in the dataset contains a textual context describing the probability of events and a textual question about the probability of events occurring in the context. Moreover, we provide a Bayesian Network (BN) corresponding to the context with their conditional probability tables (CPTs) and a probability value computed as the answer to the question. In this section, we provide an overview of the generation process and the dataset structure. Due to space constraints, details are included in the Appendix A.

\subsection{Bayesian Network}

In the first step of our dataset creation, we generate all isomorphic graphs that would serve as our Bayesian Networks, each including up to ten random variables. We generate these graphs with the following properties: 1) all graphs are \textit{Directed Acyclic (DAGs)}, 2) all graphs are \textit{Weakly Connected}, and 3) each node has at most one parent (\textit{arborescence}). This results in dataset splits denoted as $V_i$ for \( i \in \{2, 3, \ldots, 10\} \), each including a set of graphs with $i$ nodes (random variables). Properties 2 and 3 are necessary to control the complexity of the splits. The complexity increases as $i$ increases. To clarify, property 2 prevents the breakdown of a graph into smaller, independent, and subsequently simpler components. Assumption 3 results in $2+(V-1)*4$ probability entries in a BN's CPTs with $V$ variables (2 probabilities for the root node and 4 for other nodes). For example, in Figure~\ref{fig:dataset_example}, we depict a BN over nodes G (Green), O (Orange), and P (Pink) with corresponding CPTs and a total of 10 probability entries. While property 3 might restrict the networks' coverage, it enables us to analyze the examples with better control over their complexity. Further, we assume each random variable is binary and fill their associated conditional probability tables with uniformly random generated probabilities ranging from $0.01$ to $0.99$.

\subsection{Query}

We generate only \textit{complex} queries for a given Bayesian Network. By \textit{complex}, we mean those which require all variables in the BN for inference. For example, for a size 2 BN with variables $A$ and $B$ where $A$ is the parent of $B$, all possible queries are $P(A), P(B), P(A, B), P(A|B)$, and $P(B|A)$. Among these, $P(A)$ is the only query that is not \textit{complex} since it can be answered only with the CPT of $A$ and, therefore, is not selected. We assign true/false values randomly to the query variables.

\subsection{Textual Context and Question}

After generating the BNs, CPTs, and queries, we create the textual context and question that describes the BN (mapping every entry in the CPTs to natural language) and the query, respectively. For the context, sentences follow two templates: 1) For explaining prior probabilities, we use the template \textit{"\{node name\} is True/False with \#\#\% Probability"}. 2) For explaining dependent nodes, we use the template \textit{"\{node name\} is True/False with \#\#\% Probability, if \{parent node name\} is True/False"}. Figure~\ref{fig:dataset_example} shows the context for a given BN and CPTs. The template for textual questions is: \textit{"What is the probability that \{a node name\} is True/False and \ldots given that \{a node name\} is True/False and \ldots ?"}. For a query without evidence variables, the text after ``given'' is omitted. For example, $P(A|\sim\!B)$ would be translated to the textual query (question), "What is the probability that A event is True given that B event is False?".

\subsection{Verification and Inference}

At the final step of our dataset generation, we use the Python library pgmpy~\cite{ankan2015pgmpy} to verify the soundness of our BN, probabilities, and queries and to infer the answer to the queries. This library, which is specifically designed for creating and working with Bayesian Networks, takes our generated CPTs as input, verifies their soundness and answers the queries via an exact inference method.

\section{Methodology}
Here, we introduce our approach to probabilistic reasoning with LLMs. We use a basic QA and a Chain of Thought prompting as baselines and propose new strategies for mapping to symbolic representations.

\subsection{Baselines}

\subsubsection{Basic QA Prompting} In this prompting approach,  we ask the model to generate a single numerical answer to the probabilistic question. We experiment with zero-shot and few-shot settings. In the zero-shot setting, only the instruction, context, and question are in the prompt. Figure~\ref{fig:threexampledemostration} shows the few-shot setting with an in-context example included.

\subsubsection{Chain of Thought (COT)} Following COT~\cite{NEURIPS2022_9d560961}, we prompt the LLM to explain its reasoning process while refraining completely (in zero-shot setting) or partially (in few-shot setting) from imposing a strict solution structure. COT's prompting structure is similar to Basic QA's, except that the requested answer should explain the mathematical reasoning, calculate the final answer, and generate the target output probability at the end, as shown in Figure~\ref{fig:threexampledemostration}.

\begin{figure*}
    \centering
    \includegraphics[width=0.98\linewidth]{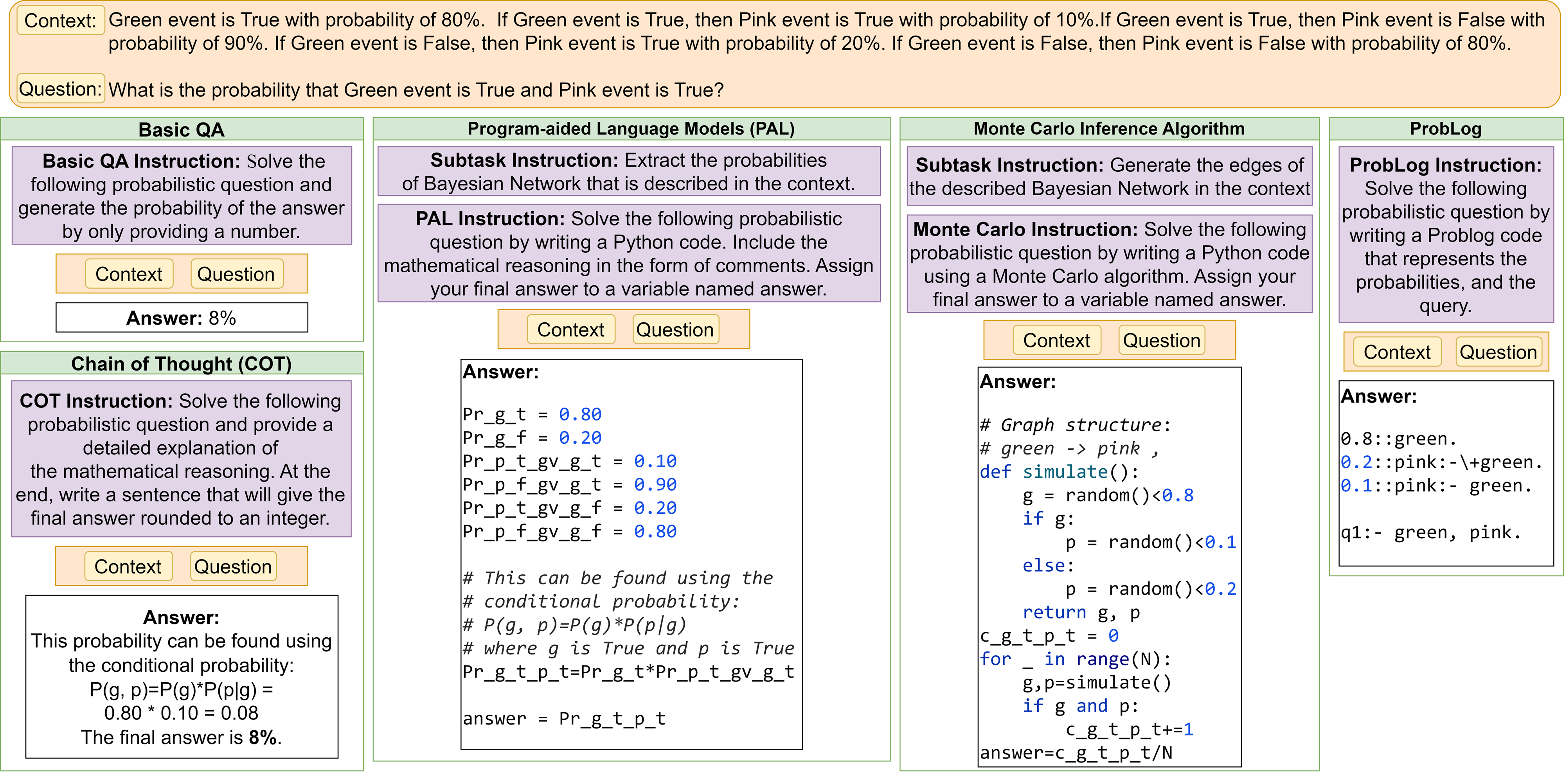}
\caption{This figure shows our main prompting approaches, PAL, Monte Carlo, and ProbLog, alongside the baseline approaches, Basic QA and COT. Each prompt begins with an instruction (purple boxes) that describes the problem and the answer format. Then, the context, question, and answer are demonstrated depending on the approach. We display only the first in-context example here but use 3 in our experiments. When we require the use of our designed subtasks in the prompt, their instructions and answers are prepended to the main approach, as shown in the PAL method for \textit{Number Extraction} and the Monte Carlo method for \textit{Graph Generation}.}

\label{fig:threexampledemostration}
\end{figure*}

\subsection{Structured Prompting with Subtasks}
\label{sec:ne-gg}


Given the complexity of the probabilistic inference, we propose to divide the problem into multiple steps and demonstrate the steps to the LLMs in one prompt. This approach has shown to be effective in other similar research~\cite{jin2023CLADDER,poesia2023certified}. The most intuitive step for our problem is identifying the probabilities of the single events and the conditional probabilities. Another important step is to recognize the variables' dependencies, which are the edges in the corresponding BN. Consequently, we use the extraction of probability values from text, named \textit{Number Extraction} subtask, and probabilistic dependencies \textit{Graph Generation} subtask as the prior steps to final reasoning.

\subsubsection{Number Extraction} In this subtask, the LLM should extract the CPT probabilities from the input context and output them in a structured format as Python-compatible variable assignments. Each line of the output represents either a probability of an event or a conditional probability. This format is shown in the ``PAL'' column of Figure~\ref{fig:threexampledemostration}. To add this subtask, we prepend its corresponding instruction to the prompt, i.e., ``Extract the probabilities...'' and its answers to the in-context examples. This subtask aims to facilitate the correct extraction of probabilities before reasoning to avoid hallucination~\cite{ouyang2022training} of incorrect numbers. 

\subsubsection{Graph Generation} In this subtask, we want the LLM to generate the underlying Bayesian Network of the given context as a list of edges, each indicating the direct dependency between two variables. For a BN with $V$ variables, the output should consist of $V-1$ edges in the format of $v_i -> v_j$ separated by `,'. This will help the model capture the random variables' dependency structure and utilize it in mapping to symbolic solutions. Similar to \textit{Number Extraction}, to use \textit{Graph Generation}, its instruction and answers are added to the prompts. An example of a simple Bayesian Network with two nodes, Green and Pink, is shown in the ``Monte Carlo Inference Algorithm'' column of Figure~\ref{fig:threexampledemostration}.

 
\subsection{Mapping to Symbolic Computations}

\subsubsection{Program-aided Language Models (PAL)} PAL~\cite{gao2023pal} is the first study to analyze the use of Python in various mathematical reasoning QA datasets. However, most problems tested in PAL require only a few lines of code, unlike BLInD, which may need complex, multi-line calculations depending on the BN. A benefit of the PAL method is it bypasses the challenge of mathematical calculations by the LLM itself. Similar to the original PAL paper, we instruct the LLM to solve the problem by explaining the mathematical reasoning process and mapping to the basic arithmetic calculations in Python code leading to the answer. Figure~\ref{fig:threexampledemostration} shows an in-context example of this approach, where the \textit{Number Extraction} subtask is first used, followed by mapping the reasoning solution to Python code.

\subsubsection{Monte Carlo Inference Algorithm} Given the efficiency and popularity of Monte Carlo Algorithms~\cite{KollerFriedman2009} for approximate inference, we try to map our problem the \textit{Direct Sampling} technique for inference. An LLM can use this method by generating a Python function, we call \textit{simulate}, that samples all events according to the probabilistic dependencies expressed in the BN. Here, all parent variables must be sampled before their children. Keeping this order is the main challenge in mapping to this algorithm with LLMs. Figure~\ref{fig:threexampledemostration} shows an in-context example of the Monte Carlo method with the \textit{simulate} function defined as a part of the answer. The LLM is also instructed to call this function in the generated Python code which leads to computing the answer to the probabilistic question. 


\subsubsection{Probabilistic Logical Solver (ProbLog)} In our neuro-symbolic method, we employ a technique that involves mapping the context and question to a probabilistic logical formalism. We use ProbLog~\cite{10.5555/1625275.1625673}, a probabilistic programming language that extends Prolog~\cite{Bratko00} to incorporate probabilistic logical reasoning. Here, the LLM is asked to generate a ProbLog code corresponding to the probabilities given in the context and to create the formal \textit{ProbLog query} based on the question. We subsequently execute the ProbLog code and extract the final answer. An in-context example of this code is shown in Figure~\ref{fig:threexampledemostration}. The LLM does not need extensive ProbLog programming knowledge; The three in-context examples we supply are sufficiently complex and encompass all the necessary ProbLog syntax information to enable the generation of code for our contexts and questions.

\begin{table*}
\begin{center}
\small
\begin{tabular}{|c|c|P{0.4cm}|P{0.4cm}|P{0.4cm}|P{0.4cm}|P{0.4cm}|P{0.4cm}|P{0.4cm}|P{0.4cm}|P{0.4cm}||P{0.95cm}|P{0.95cm}|P{0.95cm}|}
\hline
Model&Method & $V_2$ & $V_3$ & $V_4$ & $V_5$ & $V_6$ & $V_7$ & $V_8$ & $V_9$ & $V_{10}$ & $V_{2-5}$ & $V_{6-10}$ & $V_{2-10}$ \\

\hline
\multirow{ 4}{*}{GPT3.5} & Basic QA ZS  &\HeatCell{33}&\HeatCell{13}&\HeatCell{5}&\HeatCell{4}&\HeatCell{6}&\HeatCell{2}&\HeatCell{3}&\HeatCell{1}&\HeatCell{2}&\HeatCell{13}&\HeatCell{2}& \HeatCell{7}  \\
&Basic QA FS  &\HeatCell{3}&\HeatCell{0}&\HeatCell{1}&\HeatCell{1}&\HeatCell{2}&\HeatCell{2}&\HeatCell{1}&\HeatCell{1}&\HeatCell{0}&\HeatCell{1}&\HeatCell{1}& \HeatCell{1}  \\
&COT ZS  &\HeatCell{53}&\HeatCell{8}&\HeatCell{4}&\HeatCell{5}&\HeatCell{10}&\HeatCell{5}&\HeatCell{2}&\HeatCell{2}&\HeatCell{0}&\HeatCell{17}&\HeatCell{3}& \HeatCell{9}  \\
&COT FS  &\HeatCell{52}&\HeatCell{23}&\HeatCell{12}&\HeatCell{5}&\HeatCell{8}&\HeatCell{4}&\HeatCell{1}&\HeatCell{4}&\HeatCell{2}&\HeatCell{23}&\HeatCell{3}& \HeatCell{12}  \\

\hline
\multirow{ 4}{*}{Llama3} & Basic QA ZS &\HeatCell{31}&\HeatCell{21}&\HeatCell{5}&\HeatCell{6}&\HeatCell{6}&\HeatCell{5}&\HeatCell{1}&\HeatCell{1}&\HeatCell{0}&\HeatCell{16}&\HeatCell{3}& \HeatCell{8}  \\
&Basic QA FS &\HeatCell{3}&\HeatCell{0}&\HeatCell{1}&\HeatCell{1}&\HeatCell{2}&\HeatCell{2}&\HeatCell{1}&\HeatCell{1}&\HeatCell{0}&\HeatCell{1}&\HeatCell{1}& \HeatCell{1}  \\
&COT ZS &\HeatCell{63}&\HeatCell{45}&\HeatCell{21}&\HeatCell{17}&\HeatCell{18}&\HeatCell{11}&\HeatCell{9}&\HeatCell{4}&\HeatCell{2}&\HeatCell{37}&\HeatCell{9}& \HeatCell{21}  \\
&COT FS &\HeatCell{63}&\HeatCell{46}&\HeatCell{21}&\HeatCell{12}&\HeatCell{20}&\HeatCell{15}&\HeatCell{7}&\HeatCell{8}&\HeatCell{5}&\HeatCell{36}&\HeatCell{11}& \HeatCell{22}  \\

\hline
\multirow{ 4}{*}{GPT4} &Basic QA ZS &\HeatCell{44}&\HeatCell{23}&\HeatCell{9}&\HeatCell{9}&\HeatCell{11}&\HeatCell{11}&\HeatCell{8}&\HeatCell{8}&\HeatCell{2}&\HeatCell{21}&\HeatCell{8}& \HeatCell{14}  \\
&Basic QA FS &\HeatCell{3}&\HeatCell{0}&\HeatCell{1}&\HeatCell{1}&\HeatCell{2}&\HeatCell{2}&\HeatCell{1}&\HeatCell{1}&\HeatCell{0}&\HeatCell{1}&\HeatCell{1}& \HeatCell{1}  \\
&COT ZS &\HeatCell{79}&\HeatCell{63}&\HeatCell{27}&\HeatCell{10}&\HeatCell{17}&\HeatCell{6}&\HeatCell{5}&\HeatCell{7}&\HeatCell{6}&\HeatCell{45}&\HeatCell{8}& \HeatCell{24}  \\
&COT FS &\HeatCell{78}&\HeatCell{64}&\HeatCell{36}&\HeatCell{25}&\HeatCell{22}&\HeatCell{16}&\HeatCell{7}&\HeatCell{7}&\HeatCell{7}&\HeatCell{50}&\HeatCell{12}& \HeatCell{29}  \\

\hline
\end{tabular}
\end{center}
\caption{Comparison of GPT3.5, Llama3, and GPT4 accuracy results for Basic QA and COT methods, presented as percentages. The columns represent dataset splits $V_i$, and the average results for smaller BNs $V_{2-5}$, larger BNs $V_{6-10}$, and all BNs $V_{2-10}$. The rows show the methods tested with zero-shot (ZS) or few-shot (FS) settings.}
\label{tab:DirectandCOTGPT3GPT4Results}
\end{table*}

\section{Experiments}

Here, we present the results of our experiments on our baselines and proposed prompting techniques, which we evaluate on BLInD and an adapted version of the CLADDER dataset. 

\noindent\textbf{LLM Models} In our experiments, we employ three LLMs: Llama3~\cite{meta2024llama3}, specifically the \textit{meta-llama-3-70b-instruct} variant; GPT3.5~\cite{brown2020language}, using the \textit{gpt-3.5-turbo-0613} release; and GPT4~\cite{openai2023GPT4}, with the \textit{gpt-4-0613} version. These models are evaluated in zero-shot and few-shot settings without any fine-tuning. See Appendix B for details about hyper-parameters and Appendix C for the results of the other models that we tested, such as Mistral~\cite{jiang2023mistral} and Llama2~\cite{touvron2023llama}.

\noindent\textbf{Few-shot Example Selection} We selected a set of shots from a development dataset and manually crafted their solutions. After evaluating these shots on the same development dataset, we identified the three most effective examples through an iterative, trial-and-error process. To ensure a fair comparison, we consistently use these three examples across all models and methods rather than tailoring the examples to each specific approach or model. See Appendix E for more details about the prompts, examples, and solutions.

\noindent\textbf{Evaluation Metrics.} Given a context and a question, we consider an answer probability to be correct if it is within the $\pm 0.01$ range of the ground truth probability (ex. any answer within $[0.30-0.32]$ is correct for a ground truth of $0.31$). We chose this threshold because we found that the \textbf{outputs were either correct or wrong with a large margin} since the exact line of computations is not followed in those cases. This bimodal behavior, which differs from traditional regression models, renders evaluation metrics such as MSE and L1 ineffective. In this context, correct predictions contribute minimally to the error, while incorrect predictions dominate the error metric in a way that lacks relevance. Additionally, this behavior rendered larger thresholds useless, as the accuracy at a threshold of $0.01$ was nearly identical to that at $0.05$. A narrower threshold would cause the challenge of number precision which is not in our interest due to the nature of our task and has been previously highlighted in~\cite{gao2023pal} for other mathematical reasoning problems. For the evaluation of the subtasks, we count an output as accurate if \textbf{all} the numbers in \textit{Number Extraction} and all the edges in \textit{Graph Generation} are correctly generated without redundancy. As a result, the numbers in all Tables are accuracy values in percentages based on these criteria.

\noindent\textbf{Evaluation Splits of the Dataset.} 
To assess our methods, we randomly select 100 instances from each data split $V_i$, resulting in a total of 900 instances. This test set remains consistent across all of our LLMs.

\subsection{Solving Probabilistic Questions Directly}
\label{sec:baselinemethodsresults}
Here, we apply the baseline methods of Basic QA and COT, focusing on answering probabilistic questions directly. Their performance is detailed in Table~\ref{tab:DirectandCOTGPT3GPT4Results}. In Basic QA, overall, the results are very low, and only GPT4 achieves meaningful results for some of the dataset splits with \textbf{smaller BNs}, i.e. $V_i$ with $i<=5$. In the few-shot setting of Basic QA, the additional examples, which do not explain their solutions, worsen the results for all the LLMs. Using COT improved the results for all models. However, even with COT, these models struggle particularly in dataset splits with \textbf{larger BNs}, i.e. $V_i$ with $i>5$. We will use these baselines to compare with our symbolic methods.

\begin{table}
\small
\centering
\begin{tabular}{|P{1.25cm}|P{0.31cm}|p{0.31cm}|P{0.31cm}|P{0.31cm}|P{0.31cm}|P{0.31cm}|P{0.31cm}|P{0.31cm}|P{0.31cm}|}
\hline
 LLM / $V_i$ & $V_2$ & $V_3$ & $V_4$ & $V_5$ & $V_6$ & $V_7$ & $V_8$ & $V_9$ & $V_{10}$ \\
\hline
GPT3.5 & 100 & 100 & 100 & 100 & 96 & 95 & 98 & 94 & 94 \\
\hline
Llama3 & 100 & 100 & 100 & 100 & 100 & 100 & 99 & 100 & 100 \\
\hline
GPT4 & 100 & 100 & 100 & 100 & 100 & 100 & 100 & 100 & 100 \\
\hline
\end{tabular}
\caption{\textit{Number Extraction} accuracy of our models, presented as percentages and based on the exact match of all the extracted probabilities of the context.}
\label{table:number_extraction}
\end{table}

\subsection{Subtasks}
\label{sec:NEGGresults}

Before reporting the results of the final symbolic solvers, we discuss the results of \textit{Number Extraction} and \textit{Graph Generation}. The results of \textit{Number Extraction} are shown in Table~\ref{table:number_extraction}, which indicates this subtask is quite straightforward to solve. Llama3 and GPT4 extract all numbers correctly, achieving 100\% accuracy in all $V_i$s. For GPT3.5, although the accuracy drops as the number of variables increases, it remains overall very high above 90\%. 

\begin{table}
\small
\centering
\begin{tabular}{|P{1.25cm}|P{0.31cm}|p{0.31cm}|P{0.31cm}|P{0.31cm}|P{0.31cm}|P{0.31cm}|P{0.31cm}|P{0.31cm}|P{0.31cm}|}
\hline
LLM / $V_i$ & $V_2$ & $V_3$ & $V_4$ & $V_5$ & $V_6$ & $V_7$ & $V_8$ & $V_9$ & $V_{10}$ \\
\hline					
GPT3.5 & 100 & 95 & 92 & 93 & 84 & 75 & 79 & 73 & 78 \\
\hline
Llama3 & 99 & 99 & 99 & 100 & 99 & 95 & 94 & 93 & 89 \\
\hline
GPT4 & 100 & 100 & 100 & 100 & 100 & 100 & 100 & 100 & 100 \\
\hline
\end{tabular}
\caption{\textit{Graph Generation} accuracy, presented as percentages. The extracted graph should exactly match the correct BN graph to be counted as correct.}
\label{table:graph_generation}
\end{table}

The results of the more challenging \textit{Graph Generation} subtask are shown in Table~\ref{table:graph_generation}. Mirroring the pattern observed in Table~\ref{table:number_extraction}, we notice a decline in accuracy as the number of variables increases. However, the drop in accuracy is more notable and goes from 100\% in $V_2$ to as low as 73\% in $V_9$ for GPT3.5. GPT4 generated all graphs correctly in all $V_i$s. Looking at the results in Table~\ref{table:graph_generation}, we observe minor inconsistencies such as a lower accuracy for $V_9$ compared to $V_{10}$. These inconsistencies stem from the inherent randomness in the output generation of LLMs and our random selection dataset instances. These small inconsistencies happen in some other parts of our experiments but they do not detract from the core message and pattern of our findings.

Note that the accuracies reported here are calculated when each subtask is prompted to the LLM as a standalone problem. When integrating these subtasks within our solutions, we prompt the LLM to generate both the subtask and the problem solution together. This affects the subtasks' accuracy and, consequently, usefulness depending on the symbolic method, as discussed in the next section.

\begin{table*}
\begin{center}
\small
\begin{tabular}{|c|c|P{0.4cm}|P{0.4cm}|P{0.4cm}|P{0.4cm}|P{0.4cm}|P{0.4cm}|P{0.4cm}|P{0.4cm}|P{0.4cm}||P{0.95cm}|P{0.95cm}|P{0.95cm}|}
\hline
Model&Method & $V_2$ & $V_3$ & $V_4$ & $V_5$ & $V_6$ & $V_7$ & $V_8$ & $V_9$ & $V_{10}$ & $V_{2-5}$ & $V_{6-10}$ & $V_{2-10}$ \\

\hline

\multirow{ 5}{*}{GPT3.5} &PAL &\HeatCell{66}&\HeatCell{34}&\HeatCell{25}&\HeatCell{17}&\HeatCell{14}&\HeatCell{9}&\HeatCell{6}&\HeatCell{5}&\HeatCell{2}&\HeatCell{35}&\HeatCell{7}& \HeatCell{19}  \\
&PAL w/NE  &\HeatCell{85}&\HeatCell{66}&\HeatCell{41}&\HeatCell{27}&\HeatCell{19}&\HeatCell{12}&\HeatCell{5}&\HeatCell{3}&\HeatCell{6}&\HeatCell{54}&\HeatCell{9}& \HeatCell{29}  \\
&Monte Carlo  &\HeatCell{79}&\HeatCell{63}&\HeatCell{71}&\HeatCell{65}&\HeatCell{41}&\HeatCell{32}&\HeatCell{33}&\HeatCell{18}&\HeatCell{14}&\HeatCell{69}&\HeatCell{27}& \HeatCell{46}  \\
&Monte Carlo w/GG  &\HeatCell{85}&\HeatCell{82}&\HeatCell{83}&\HeatCell{68}&\HeatCell{42}&\HeatCell{31}&\HeatCell{28}&\HeatCell{18}&\HeatCell{8}&\HeatCell{79}&\HeatCell{25}& \HeatCell{49}  \\
&ProbLog  &\HeatCell{87}&\HeatCell{82}&\HeatCell{88}&\HeatCell{75}&\HeatCell{59}&\HeatCell{52}&\HeatCell{46}&\HeatCell{38}&\HeatCell{35}&\HeatCell{83}&\HeatCell{46}& \HeatCell{62}  \\

\hline
\multirow{ 5}{*}{Llama3} &PAL  &\HeatCell{100}&\HeatCell{84}&\HeatCell{57}&\HeatCell{36}&\HeatCell{31}&\HeatCell{20}&\HeatCell{10}&\HeatCell{14}&\HeatCell{8}&\HeatCell{69}&\HeatCell{17}& \HeatCell{40}  \\
&PAL w/NE  &\HeatCell{100}&\HeatCell{95}&\HeatCell{71}&\HeatCell{52}&\HeatCell{46}&\HeatCell{28}&\HeatCell{16}&\HeatCell{16}&\HeatCell{9}&\HeatCell{79}&\HeatCell{23}& \HeatCell{48}  \\
&Monte Carlo  &\HeatCell{100}&\HeatCell{100}&\HeatCell{96}&\HeatCell{96}&\HeatCell{92}&\HeatCell{85}&\HeatCell{77}&\HeatCell{72}&\HeatCell{64}&\HeatCell{98}&\HeatCell{78}& \HeatCell{87}  \\
&ProbLog  &\HeatCell{90}&\HeatCell{95}&\HeatCell{92}&\HeatCell{87}&\HeatCell{95}&\HeatCell{94}&\HeatCell{87}&\HeatCell{82}&\HeatCell{78}&\HeatCell{91}&\HeatCell{87}& \HeatCell{89}  \\

\hline
\multirow{ 4}{*}{GPT4} &PAL&\HeatCell{100}&\HeatCell{86}&\HeatCell{70}&\HeatCell{58}&\HeatCell{50}&\HeatCell{27}&\HeatCell{21}&\HeatCell{14}&\HeatCell{7}&\HeatCell{78}&\HeatCell{24}& \HeatCell{48}  \\
&PAL w/NE  &\HeatCell{99}&\HeatCell{96}&\HeatCell{78}&\HeatCell{64}&\HeatCell{43}&\HeatCell{26}&\HeatCell{14}&\HeatCell{14}&\HeatCell{10}&\HeatCell{84}&\HeatCell{21}& \HeatCell{49}  \\
&Monte Carlo &\HeatCell{100}&\HeatCell{99}&\HeatCell{98}&\HeatCell{100}&\HeatCell{92}&\HeatCell{94}&\HeatCell{92}&\HeatCell{90}&\HeatCell{88}&\HeatCell{99}&\HeatCell{91}& \HeatCell{94}  \\
&Monte Carlo w/GG &\HeatCell{100}&\HeatCell{97}&\HeatCell{99}&\HeatCell{98}&\HeatCell{97}&\HeatCell{96}&\HeatCell{88}&\HeatCell{92}&\HeatCell{85}&\HeatCell{99}&\HeatCell{92}& \HeatCell{95}  \\
&ProbLog &\HeatCell{99}&\HeatCell{98}&\HeatCell{100}&\HeatCell{100}&\HeatCell{96}&\HeatCell{97}&\HeatCell{97}&\HeatCell{98}&\HeatCell{96}&\HeatCell{99}&\HeatCell{97}& \HeatCell{98}  \\

\hline
\end{tabular}
\end{center}
\caption{GPT3.5, Llama3, and GPT4 accuracy results, presented as percentages, for the PAL, Monte Carlo, and ProbLog methods. w/NE and w/GG denote the inclusion of \textit{Number Extraction} and \textit{Graph Generation}. The columns represent dataset splits $V_i$, and the average results for smaller BNs $V_{2-5}$, larger BNs $V_{6-10}$, and all BNs $V_{2-10}$.}
\label{tab:methodsGPT3GPT4Results}
\end{table*}

\subsection{Evaluation of Proposed Methods}
\label{sec:mainresults}

Here, we assess our three proposed approaches,  PAL, Monte Carlo, and ProbLog combined with \textit{Number Extraction} and \textit{Graph Generation} in three LLMs. We discuss how effective the subtasks are with each method and analyze their impact based on factors like the number of variables and the employed LLM. The results of these experiments are presented in Table~\ref{tab:methodsGPT3GPT4Results}. Not all combinations lead to better performance; some reduce overall accuracy. As such, these underperforming configurations have been excluded from the main table. For a comprehensive overview, including the underperforming configurations, refer to Appendix C.

\subsubsection{PAL, Monte Carlo, and ProbLog} As seen in Table~\ref{tab:methodsGPT3GPT4Results}, there is a significant improvement in the performance of all of these methods, compared to Basic QA and COT (previously shown in Table~\ref{tab:DirectandCOTGPT3GPT4Results}). Additionally, accuracy consistently increases across all models (closed and open-source) by transitioning from PAL to Monte Carlo and then to ProbLog. This suggests that the proposed methods' effectiveness is independent of the LLMs. All models struggle to generate a solution with PAL for larger BNs. In contrast, when we utilize the Monte Carlo approach, the accuracy of these larger BNs sharply increases, suggesting proficiency of LLMs at mapping the entire BN correctly to a Monte Carlo algorithm code, even for a large number of variables. 

ProbLog eliminates the challenge of structural programming and requires only the correct extraction of probabilities (represented declaratively) and generating a corresponding \textit{ProbLog query}. In this case, GPT4 can solve almost every question. GPT3.5 is mainly held back by the challenge of writing probabilistic logical programming code. While Llama3 (like GPT4) featured nearly 100\% correct Python syntax in the PAL and Monte Carlo methods, it sometimes fails to create coherent ProbLog code. This leads to somewhat inconsistent performance among smaller BNs. See the Appendix D for details of coding syntax errors.

\subsubsection{PAL with Number Extraction} This combination, which is shown as ``PAL w/NE'' in Table~\ref{tab:methodsGPT3GPT4Results}, shows that the accuracy of both GPT3.5 and Llama3 benefits from the addition of the \textit{Number Extraction} subtas to the PAL prompt. This appears to reduce the hallucination~\cite{ouyang2022training} of probability values in PAL solutions, as we further confirmed by analyzing several test cases. This subtask was not needed for the more robust GPT4, which can remember the numbers and its addition resulted in marginal improvements.

\subsubsection{Monte Carlo with Graph Generation} The accuracy of GPT3.5 and GPT4 improved when the Monte Carlo method was combined with the \textit{Graph Generation} subtask, shown as `Monte Carlo w/GG'' in Table~\ref{tab:methodsGPT3GPT4Results}. \textit{Graph Generation} subtask further improves the already high performance of Monte Carlo for larger BNs and enables GPT4 to reach a near-perfect average accuracy of 95\% in this setting. The improvements caused by the addition of \textit{Graph Generation} are not surprising since the Monte Carlo method generates a Python function with many nested ``if'' structures tied to the underlying Bayesian Network's graph structure. However, Llama3 is the exception and the only model that does not benefit from this configuration, as discussed further below.

\subsection{Discussion}

\subsubsection{Practical Use of Subtasks} While our proposed approaches proved effective, independently of the LLMs, this was not true for our subtasks. As mentioned earlier, we comprehensively tested all the configurations, but not all of them improved our models. This raises a few questions: Q1) Why do the added subtasks not always help? For example, in principle, the Monte Carlo method could use \textit{Number Extraction} in its code to improve. Q2) Why does ProbLog not improve with any subtasks? Q3) Why does Llama3 not improve with \textit{Graph Generation} in its Monte Carlo method like GPT3.5 and GPT4? Q4) Why does no method improve by adding both subtasks together? To answer these questions, we looked at subtask generation and their utilization by LLMs more closely, which led to two main findings.

The first finding is that in contrast to most mathematical problems tested with LLMs, which require brief solutions~\cite{kim-etal-2023-aint,frieder2023mathematical}, our dataset demands the generation of large outputs. When the added information by subtasks is not exploited effectively and lengthens the output even more for no reason, it leads to a notable drop in the LLM performance. For example, while the graph structure is intuitively helpful for probabilistic inference, the Python code in PAL does not utilize it directly. This directly addresses Q1 and touches on Q4. The main bottleneck of the ProbLog method was the syntax errors that subtasks could help with, which answers Q2.

The second finding concerns the accuracy of the subtasks, which drops when generated in the same prompt with the main solution (as we prompt the LLM only once). This puts Llama3 in a precarious position regarding the Monte Carlo method with its high accuracy. For \textit{Graph Generation} to further improve this method, it has to have an accuracy higher than the method to be helpful. However, that is not the case for this configuration for Llama3. For instance, the accuracy of Llama3 in the Monte Carlo method for $V_{10}$ is 64\% (Table~\ref{tab:methodsGPT3GPT4Results}), which is already higher than \textit{Graph Generation} accuracy for $V_{10}$ that is 56\% when generated in the same prompt (See the Supplementary for detailed results). For GPT3.5 and GPT4, \textit{Graph Generation} accuracy remains high enough, which in the case of GPT3.5 is partially due to its weaker performance in the Monte Carlo method. This finding resolves Q3 and provides further insights into Q4. 

\subsubsection{Trade-off Between Complexity and Effectiveness}
Among our 5 methods, the Simplest and the most efficient one is the Basic QA, which generates a few tokens. COT slightly improves the results at the cost of more tokens in the input and output. There is a significant improvement in accuracy, moving to our main methods with external tools. Their LLM code generation time is the same as COT, but they need the additional time to execute the generated program. According to our experiments, the probabilistic inference run-time is negligible compared to the inference run-time of LLM output generation which is a few milliseconds versus seconds taken by LLMs. However, from an algorithmic perspective for probabilistic inference, ProbLog will be more complex compared to Monte Carlo sampling as it needs to deal with logical representations. Given that a probabilistic network question described in a natural language context forms rather small Bayesian Networks, our methods are practical for solving this problem.

\subsubsection{Use of External Tools} Using external tools with LLMs is an area of research that leverages the LLMs and exploits diverse computational paradigms~\cite{schick2023toolformerlanguagemodelsteach}. The tools we use are 1) pure Python for PAL and Monte Carlo methods and 2) Python plus the underlying ProbLog engine. All these tools are open-source, and conversion to their Python interface is highly accurate using language models. They are more efficient compared to LLMs, as discussed above, and their operation as black-box executables requires minimal computing resources.

\subsection{Adaptation of the CLADDER Dataset}
\label{sec:cladderresults}
\begin{figure}
    \centering
    \includegraphics[width=0.98\linewidth]{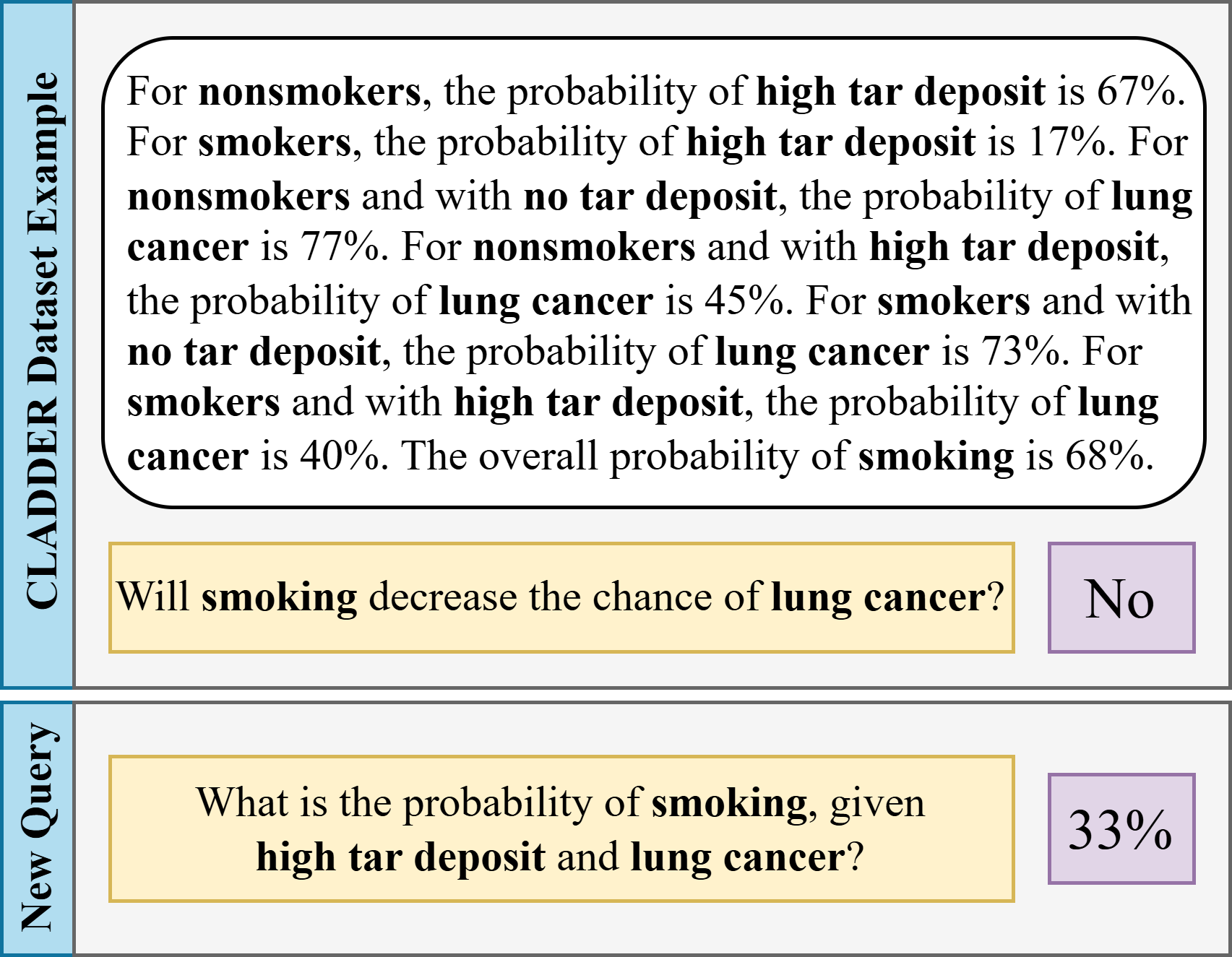}
    \caption{An example from the hard test subset of CLADDER dataset and a corresponding generated probabilistic query. The top section displays the context with events in bold font (white box), a query (yellow box), and the binary (Yes/No) answer (purple box). The bottom section presents an example of a probabilistic query derived from the same context, which requires a probability-based response.}
    \label{fig:cladderexample_new}
\end{figure}

We conclude our experiments by testing our methods on an adaptation of the CLADDER dataset~\cite{jin2023CLADDER}. This dataset is designed to test the causal reasoning capabilities of LLMs. The contexts of this QA dataset describe a probabilistic causal structure with a maximum of 4 variables, designed from natural-sounding templates. Questions in this dataset mostly require a binary yes/no answer and not a probability. Our results are, thus, not comparable to the ones in~\cite{jin2023CLADDER}. Using the natural-sounding contexts in CLADDER's \textit{hard tests} split, we created challenging queries for the contexts and sample 100 instances. Figure~\ref{fig:cladderexample_new} provides an example of this dataset along with one of our generated queries for its context.

The results of our tests on this adaptation of CLADDER are shown in Table~\ref{tab:CLADDER}, which follow the same trend seen in BLInD for smaller BNs. For example, adding \textit{Graph Generation} to the Monte Carlo method does not improve the model here as it was most helpful when the number of variables was large. This consistency with BLInD evaluations further solidifies our claims. When testing our methods on the CLADDER, we used the same in-context examples of BLInD without tailoring them to the more natural contexts of CLADDER as we found it unnecessary. The performance remained very high, as seen in Table~\ref{tab:CLADDER}. Based on the results from BLInD and CLADDER, our experiments suggest that the difficulty of probabilistic reasoning over text is not directly correlated with the naturalness and sophistication of the language. Instead, it depends on the depth of reasoning required and the number of variables involved.

\begin{table}
\centering
\small
\centering
\begin{tabular}{|c|c|c|c|}

\hline
Method  & GPT3.5 & Llama3 & GPT4 \\
\hline
Basic QA ZS & \HeatCell{0} & \HeatCell{0} & \HeatCell{20} \\
Basic QA FS & \HeatCell{0} & \HeatCell{0} & \HeatCell{0} \\
\hline
COT ZS & \HeatCell{9} & \HeatCell{47} & \HeatCell{65} \\
COT FS & \HeatCell{3} & \HeatCell{38} & \HeatCell{64} \\
\hline
PAL & \HeatCell{26} & \HeatCell{91} & \HeatCell{96}\\
PAL w/NE & \HeatCell{39} & \HeatCell{96} & \HeatCell{96}\\
\hline
Monte Carlo & \HeatCell{75} & \HeatCell{96} & \HeatCell{98} \\
Monte Carlo w/GG & \HeatCell{75} & \HeatCell{95} & \HeatCell{97} \\
\hline
ProbLog & \HeatCell{71} & \HeatCell{84} & \HeatCell{97}\\
\hline
\end{tabular}

\caption{Accuracy results of the CLADDER dataset as percentages. w/NE, w/GG, ZS and FS denote use of \textit{Number Extraction}, \textit{Graph Generation}, zero-shot and few-shot.}
\label{tab:CLADDER}
\end{table}

\section{Conclusion and Future Work}
In this work, we introduced BLInD, a new dataset for dealing with uncertain text and evaluating the capabilities of LLMs on probabilistic reasoning over text with explicitly quantified uncertainty. We proposed several prompt engineering techniques, mapping the problem to different formal representations, including Python low-level arithmetic computations, approximate inference algorithms, and probabilistic logical programming. Our evaluations demonstrated that our main methods significantly improve the performance of LLMs on BLInD and on an adapted version of another dataset, CLADDER, with natural-sounding contexts.

Our methods solve probabilistic questions without any fine-tuning or modification to the architecture of LLMs. As an interesting direction to continue this work, future research could explore alterations to open-source LLMs' architectures and training objectives specifically designed for probabilistic inference.

\section*{Acknowledgment}

This project is supported by the Office of Naval Research (ONR) grant N00014-23-1-2417. Any opinions, findings, and conclusions or recommendations expressed in this material are those of the authors and do not necessarily reflect the views of Office of Naval Research.

\bibliography{aaai25,anthology}

\appendix

\section{Appendix A: Dataset}
\label{appendix:datasetdetails}

\subsection{Generating All Isomorphic DAGs}
\label{appendix:GeneratingAllIsomorphicWeaklyConnectedDirectedGraphs}

Algorithm~\ref{alg:graphgen} is designed to create and collect unique directed acyclic graphs (DAGs) that are also weakly connected. It starts by initializing a set of nodes, labeled from $n_0$ to $n_{n-1}$, where $n$ represents the number of nodes specified by the user. An empty set called $allGraphs$ is also prepared to store the graphs that meet all required conditions.

During each iteration, up to a maximum defined by $maxIter$, the algorithm generates a random permutation of edges between nodes and forms a directed graph from these edges. Each generated graph is then checked for several properties. It must be a DAG and must be weakly connected, ensuring there is a path between any two nodes regardless of edge direction. Additionally, the graph must not be isomorphic to any graph already in $allGraphs$, guaranteeing the uniqueness of each graph in terms of its structure. For a graph to be considered $Arborescence$, it must not have any node with an in-degree greater than one. This enforces a tree-like structure where each node has exactly one parent, except the root which has none.

The variable $maxIter$ can be adjusted in proportion to the number of nodes. For a small number of nodes, specifically when $n < 5$, a few hundred iterations are generally sufficient to discover all possible graphs. However, as the number of nodes increases, even a significantly high $maxIter$ may not uncover all possible graphs. Still, enough distinct types of graphs are typically generated, providing a diverse dataset to sample the final instances.

Finally, after completing the iterations, the algorithm returns the set $allGraphs$, which now contains all the graphs generated that meet the described criteria. This procedure allows for the exploration of various graph configurations within the constraints set by DAG properties, connectivity, and Arborescence (if selected), capturing a diverse set of graph structures. The use of maximum in-degree (MID) in the algorithm's conditions is specifically noted to clarify its role in enforcing the Arborescence condition when required.

\begin{algorithm}
\caption{Generate All Isomorphic Weakly Connected Directed Graphs}
\label{alg:graphgen}
\begin{algorithmic}[1]
\REQUIRE $n$, $maxIter$, $Arborescence$ 
\STATE Initialize $nodes$ with labels $n_0, n_1, \dots, n_{n-1}$
\STATE $allGraphs \gets \emptyset$
\FOR{$iter = 1$ to $maxIter$}
    \STATE $edges \gets$ randomPermutations($nodes, n$)
    \STATE $graph \gets$ directedGraph($edges$)
    \IF{isDAG($graph$) \& isConnected($graph$) \& isNotIsomorphic($graph, allGraphs$) \& Not ($Arborescence$ \& $1 <$ MID($graph$))}
        \STATE add $graph$ to $allGraphs$
    \ENDIF
\ENDFOR
\RETURN $allGraphs$
\STATE \COMMENT{MID is the maximum in-degree of nodes}
\end{algorithmic}
\end{algorithm}

\subsection{Number of Generated Graphs}
\label{appendix:NumberofGeneratedIsomorphicGraphs}
With the $maxIter$ set to $1,000,000$ in Algorithm~\ref{alg:graphgen}, we get the results detailed in Table~\ref{tab:graphsnumbersgenerated}. All possible graphs are generated for the lower number of variables ( like 2, 3, 4, 5, and 6). However, for the larger number of variables, the Algorithm could generate only a subset of the graphs. These numbers are still significant ( more than 100) and would result in a diverse dataset.

\begin{table*}
\begin{center}
\small
\begin{tabular}{|c|c|c|c|c|c|c|c|c|c|}
\hline
Number of nodes & $V_2$ & $V_3$ & $V_4$ & $V_5$ & $V_6$ & $V_7$ & $V_8$ & $V_9$ & $V_{10}$ \\ \hline
Generated Graphs & 1 & 2 & 4 & 9 & 20 & 47 & 101 & 163 & 116 \\
Number of Isomorphic Graphs & 1 & 2 & 4 & 9 & 20 & 48 & 115 & 286 & 719 \\

\hline
\end{tabular}
\caption{Number of Isomorphic Graphs generated by our Algorithm and all the possible Isomorphic Graphs.}
\label{tab:graphsnumbersgenerated}
\end{center}
\end{table*}

\subsection{Number of Queries in a BN}
\label{appendix:NumberofQueriesinaBN}
We generate all the possible queries and then select the complex ones.

\begin{lemma}
\label{lemma:NumberofQueries}
For a Bayesian network with $V$ nodes, there are $3^V - 2^V$ ways to define a query.
\end{lemma}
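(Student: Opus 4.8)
The plan is to count queries directly from the structure of a query. A query over a Bayesian network with $V$ nodes is specified by partitioning the variables into three roles: those that appear as \emph{target} variables (with an assigned truth value), those that appear as \emph{evidence} variables (with an assigned truth value), and those that appear in \emph{neither} position. So the first step is to observe that every query corresponds to an assignment of one of three labels --- target, evidence, or unused --- to each of the $V$ nodes. Naively this gives $3^V$ labelings.

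Second, I would identify which of these $3^V$ labelings fail to be valid queries and must be subtracted off. A well-formed query of the form ``$P(\text{targets} \mid \text{evidence})$'' requires at least one target variable; if the target set is empty there is nothing being asked about. The labelings with an empty target set are exactly those that assign each node one of the two remaining labels (evidence or unused), of which there are $2^V$. Hence the count of valid queries is $3^V - 2^V$. (One should note in passing that the truth-value assignment to each chosen target/evidence variable does not multiply the count here because in this lemma a ``way to define a query'' is counted at the level of which variables play which role; the paper handles the true/false choices separately when it says ``We assign true/false values randomly to the query variables.'' I would state this scoping convention explicitly so the formula matches the intended notion.)

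Third, I would sanity-check against the worked example in the text: for $V = 2$ with variables $A, B$, the formula gives $3^2 - 2^2 = 9 - 4 = 5$, matching the five queries $P(A), P(B), P(A,B), P(A\mid B), P(B\mid A)$ listed in the Query subsection. This confirms both the formula and the chosen counting convention.

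The only real subtlety --- not a hard obstacle, but the point most likely to draw an objection --- is pinning down precisely what counts as a distinct ``query,'' i.e., whether truth-value assignments, ordering of variables within the target or evidence list, or the degenerate empty-evidence case are being counted. Once the convention is fixed (unordered role assignment, nonempty target set, evidence set allowed to be empty, truth values not counted), the argument is a one-line inclusion count. I would therefore spend the bulk of the write-up making that convention unambiguous and then close with the two-term subtraction.
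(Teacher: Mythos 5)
Your proposal is correct and follows essentially the same argument as the paper: each of the $V$ nodes is assigned one of three roles (dependent/target, conditioning/evidence, or excluded), giving $3^V$ configurations, from which the $2^V$ configurations with no dependent variable are subtracted. Your added remarks on the counting convention and the $V=2$ sanity check are consistent with the paper's treatment but do not change the argument.
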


\begin{proof}
Consider a Bayesian network with $V$ nodes. Each node can be in one of three states: dependent variables, conditioning variables, or not included in the query. This setup leads to $3^V$ total configurations. However queries that do not include at least one dependent variable are not valid. In this queries every variable is either a conditioning variables, or not included in the query leading to $2^V$ configurations. As a result, when we exclude these invalid queries we end up with $3^V - 2^V$ ways to define a query.
\end{proof}

\subsection{Pipeline}
\label{appendix:datasetpipeline}

The complete pipeline of our Dataset Creation is depicted in Figure~\ref{fig:datasetpipeline}. It begins with the generation of Bayesian Networks by creating all possible isomorphic weakly connected directed graphs with node counts ranging from 2 to 10. CPTs for these BNs are then filled with uniformly distributed random probabilities. Once the BNs and CPTs are established, probabilistic queries are generated for each BN. Following the generation of the BNs and queries, textual contexts and queries are constructed. These contexts translate the CPTs into natural language. Similarly, the queries are also translated into simple, structured textual forms.

The final stage of the dataset creation involves validating and using the BNs, probabilities, and queries through the Python library pgmpy. This library performs exact inference to determine the correct answers to the probabilistic queries, ensuring the dataset instance is ready for use. At the end, the dataset instance would comprise the textual context, question and the probabilistic answer.

\begin{figure*}
    \centering
    \includegraphics[width=0.80\linewidth]{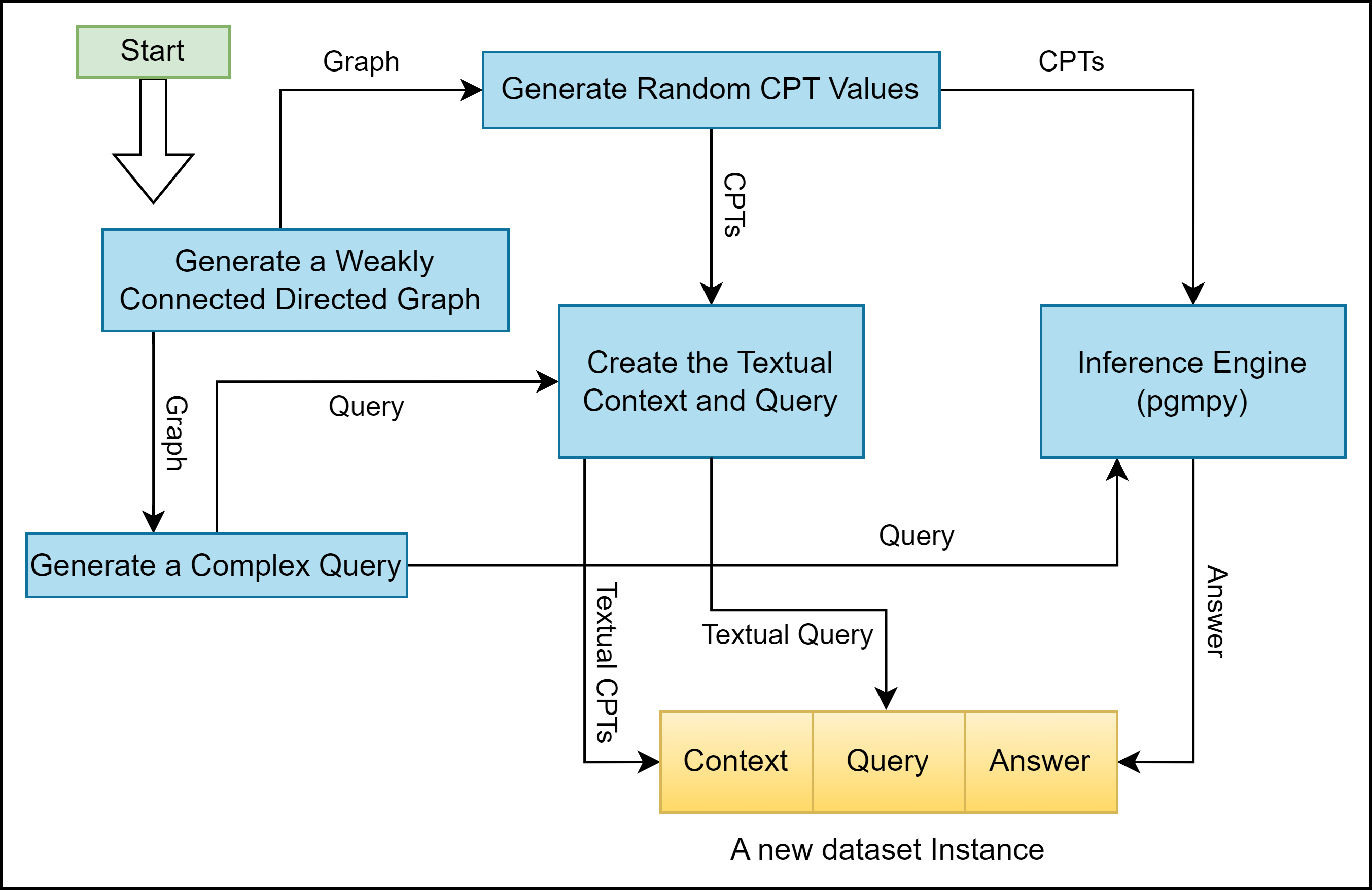}
\caption{Pipeline of the dataset creation. This includes the creation of the BN, CPTs, and query. CPT and query are used to create the textual context and question and to infer the final answer to create a dataset instance.}

\label{fig:datasetpipeline}
\end{figure*}

\section{Appendix B: Hyper-parameters} 

In testing our GPT models, we use the OpenAI API and set ``max\_tokens'' and ``temperature'' parameters to $1500$ and $0.2$ for both GPT3.5 and GPT4. The specific version for these models are ``gpt-3.5-turbo-0613'' and ``gpt-4-0613''. For Llama3, we use ``meta-llama-3-70b-instruct'' with the ``temperature'' set to $0.00$, ``max\_new\_tokens'' set to $2000$ and ``top\_p'' set to $0.95$. For the Other Open-source models, we use ``Llama2-70B-chat'' and ``mistral-7b-instruct-v0.2''. In these models we set ``temperature'' to $0.01$, ``max\_new\_tokens'' to $2000$ and ``top\_p'' to $0.999$.

The in-context examples that we used for all our methods and whenever we included few-shot testing are shown in Figure~\ref{fig:fsexamples}. Their difficulty increases as we move from examples one to three with variable sizes of 2, 3, and 4. This example selection maintains a balance between the difficulty of the in-context examples and the length of the solutions while showing different queries and methods to solve these examples. It is essential to select examples that maintain this diversity, as we have observed that the accuracy of GPT-3.5 heavily depends on the examples chosen. For example, if an event name was repeated too many times (like in all our selected examples), that would lead GPT3.5 to believe that the repeated event should be part of every solution regardless of the context! GPT4 on the other hand, was very robust, even with bad in-context examples. Regardless, in the end, they were tested with the same examples.

\section{Appendix C: Additional Experimentation}
\label{appendix:additionalresults}

\subsection{Other LLMs}
\label{appendix:opensourceresults}
The comprehensive results of Mistral and Llama2 tested on the BLInD dataset are shown in Table~\ref{tab:BQACOTLlama} next to Llama3 results. Our results show that these models are not strong enough to conduct probabilistic reasoning on our dataset. This weakness stems from both the reasoning capability and the ability to follow a long structured solution. For example, in the Monte Carlo method, these methods could not imitate the format and would produce answers that don't follow the instructed format of the solution. On the other hand, in the PAL method, these methods could follow the structure provided in the instruction and in-context examples. However, the final answer was still incorrect. 

In addition to the LLM version mentioned in Appendix B, namely ''Mistral -7b-instruct-v0.2`` and ''llama-2-70b-chat``, we also tried ''codellama-70b``~\cite{Llama2024code} and its variants ''codellama-70b-python`` and ''codellama-70b-instruct`` in attempt to improve the results of PAL, Monte Carlo and ProbLog. Ultimately, we decided to test these methods with ''llama-2-70b-chat`` since the results did not improve with other variants.

\begin{table*}[!t]
\begin{center}
\small
\begin{tabular}{|c|c|P{0.4cm}|P{0.4cm}|P{0.4cm}|P{0.4cm}|P{0.4cm}|P{0.4cm}|P{0.4cm}|P{0.4cm}|P{0.4cm}||P{0.95cm}|P{0.95cm}|P{0.95cm}|}
\hline
Model &Method & $V_2$ & $V_3$ & $V_4$ & $V_5$ & $V_6$ & $V_7$ & $V_8$ & $V_9$ & $V_{10}$ & $V_{2-5}$ & $V_{6-10}$ & $V_{2-10}$ \\
\hline
\multirow{ 7}{*}{Mistral} &Basic QA ZS  & \HeatCell{26} & \HeatCell{6} & \HeatCell{0} & \HeatCell{2} & \HeatCell{2} & \HeatCell{3} & \HeatCell{1} & \HeatCell{1} & \HeatCell{0} & \HeatCell{8} & \HeatCell{1} & \HeatCell{4}  \\
&Basic QA FS  & \HeatCell{14} & \HeatCell{7} & \HeatCell{4} & \HeatCell{2} & \HeatCell{4} & \HeatCell{2} & \HeatCell{1} & \HeatCell{1} & \HeatCell{0} & \HeatCell{6} & \HeatCell{1} & \HeatCell{3}  \\
&COT ZS  & \HeatCell{21} & \HeatCell{8} & \HeatCell{1} & \HeatCell{5} & \HeatCell{2} & \HeatCell{2} & \HeatCell{3} & \HeatCell{2} & \HeatCell{2} & \HeatCell{8} & \HeatCell{2} & \HeatCell{5}  \\
&COT FS  & \HeatCell{28} & \HeatCell{11} & \HeatCell{4} & \HeatCell{1} & \HeatCell{3} & \HeatCell{1} & \HeatCell{2} & \HeatCell{3} & \HeatCell{2} & \HeatCell{11} & \HeatCell{2} & \HeatCell{6}  \\
&PAL  & \HeatCell{4} & \HeatCell{0} & \HeatCell{1} & \HeatCell{1} & \HeatCell{2} & \HeatCell{2} & \HeatCell{1} & \HeatCell{1} & \HeatCell{0} & \HeatCell{1} & \HeatCell{1} & \HeatCell{1}  \\
&Monte Carlo  & \HeatCell{4} & \HeatCell{0} & \HeatCell{1} & \HeatCell{1} & \HeatCell{2} & \HeatCell{2} & \HeatCell{1} & \HeatCell{1} & \HeatCell{0} & \HeatCell{1} & \HeatCell{1} & \HeatCell{1}  \\
&ProbLog  & \HeatCell{3} & \HeatCell{0} & \HeatCell{1} & \HeatCell{1} & \HeatCell{2} & \HeatCell{2} & \HeatCell{1} & \HeatCell{1} & \HeatCell{0} & \HeatCell{1} & \HeatCell{1} & \HeatCell{1}  \\


\hline
\multirow{ 7}{*}{Llama2} & Basic QA ZS & \HeatCell{25} & \HeatCell{11} & \HeatCell{2} & \HeatCell{3} & \HeatCell{3} & \HeatCell{2} & \HeatCell{4} & \HeatCell{4} & \HeatCell{0} & \HeatCell{10} & \HeatCell{2} & \HeatCell{6}  \\
&Basic QA FS & \HeatCell{3} & \HeatCell{0} & \HeatCell{1} & \HeatCell{1} & \HeatCell{2} & \HeatCell{2} & \HeatCell{1} & \HeatCell{1} & \HeatCell{0} & \HeatCell{1} & \HeatCell{1} & \HeatCell{1}  \\
&COT ZS & \HeatCell{8} & \HeatCell{2} & \HeatCell{1} & \HeatCell{6} & \HeatCell{1} & \HeatCell{1} & \HeatCell{3} & \HeatCell{1} & \HeatCell{2} & \HeatCell{4} & \HeatCell{1} & \HeatCell{2}  \\
&COT FS & \HeatCell{2} & \HeatCell{1} & \HeatCell{3} & \HeatCell{2} & \HeatCell{1} & \HeatCell{2} & \HeatCell{2} & \HeatCell{4} & \HeatCell{1} & \HeatCell{2} & \HeatCell{2} & \HeatCell{2}  \\
&PAL  & \HeatCell{16} & \HeatCell{1} & \HeatCell{3} & \HeatCell{2} & \HeatCell{2} & \HeatCell{2} & \HeatCell{1} & \HeatCell{1} & \HeatCell{0} & \HeatCell{5} & \HeatCell{1} & \HeatCell{3}  \\
&Monte Carlo  & \HeatCell{10} & \HeatCell{11} & \HeatCell{4} & \HeatCell{4} & \HeatCell{3} & \HeatCell{2} & \HeatCell{1} & \HeatCell{1} & \HeatCell{0} & \HeatCell{7} & \HeatCell{1} & \HeatCell{4}  \\
&ProbLog  & \HeatCell{3} & \HeatCell{7} & \HeatCell{3} & \HeatCell{1} & \HeatCell{2} & \HeatCell{1} & \HeatCell{1} & \HeatCell{1} & \HeatCell{0} & \HeatCell{3} & \HeatCell{1} & \HeatCell{2}  \\

\hline
\multirow{ 7}{*}{Llama3} & Basic QA ZS & \HeatCell{31} & \HeatCell{21} & \HeatCell{5} & \HeatCell{6} & \HeatCell{6} & \HeatCell{5} & \HeatCell{1} & \HeatCell{1} & \HeatCell{0} & \HeatCell{16} & \HeatCell{3} & \HeatCell{8}  \\
&Basic QA FS & \HeatCell{3} & \HeatCell{0} & \HeatCell{1} & \HeatCell{1} & \HeatCell{2} & \HeatCell{2} & \HeatCell{1} & \HeatCell{1} & \HeatCell{0} & \HeatCell{1} & \HeatCell{1} & \HeatCell{1}  \\
&COT ZS & \HeatCell{63} & \HeatCell{45} & \HeatCell{21} & \HeatCell{17} & \HeatCell{18} & \HeatCell{11} & \HeatCell{9} & \HeatCell{4} & \HeatCell{2} & \HeatCell{37} & \HeatCell{9} & \HeatCell{21}  \\
&COT FS & \HeatCell{63} & \HeatCell{46} & \HeatCell{21} & \HeatCell{12} & \HeatCell{20} & \HeatCell{15} & \HeatCell{7} & \HeatCell{8} & \HeatCell{5} & \HeatCell{36} & \HeatCell{11} & \HeatCell{22}  \\
&PAL  & \HeatCell{100} & \HeatCell{84} & \HeatCell{57} & \HeatCell{36} & \HeatCell{31} & \HeatCell{20} & \HeatCell{10} & \HeatCell{14} & \HeatCell{8} & \HeatCell{69} & \HeatCell{17} & \HeatCell{40}  \\
&PAL w/NE  & \HeatCell{100} & \HeatCell{95} & \HeatCell{71} & \HeatCell{52} & \HeatCell{46} & \HeatCell{28} & \HeatCell{16} & \HeatCell{16} & \HeatCell{9} & \HeatCell{79} & \HeatCell{23} & \HeatCell{48}  \\
&Monte Carlo  & \HeatCell{100} & \HeatCell{100} & \HeatCell{96} & \HeatCell{96} & \HeatCell{92} & \HeatCell{85} & \HeatCell{77} & \HeatCell{72} & \HeatCell{64} & \HeatCell{98} & \HeatCell{78} & \HeatCell{87}  \\
&Monte Carlo w/GG & \HeatCell{100} & \HeatCell{100} & \HeatCell{99} & \HeatCell{93} & \HeatCell{91} & \HeatCell{84} & \HeatCell{70} & \HeatCell{52} & \HeatCell{51} & \HeatCell{98} & \HeatCell{70} & \HeatCell{82}  \\
&ProbLog  & \HeatCell{90} & \HeatCell{95} & \HeatCell{92} & \HeatCell{87} & \HeatCell{95} & \HeatCell{94} & \HeatCell{87} & \HeatCell{82} & \HeatCell{78} & \HeatCell{91} & \HeatCell{87} & \HeatCell{89}  \\

\hline
\end{tabular}
\end{center}
\caption{Mistral, Llama2 and Llama3 performance based on different methods on BLInD. w/NE, w/GG, ZS and FS denote utilization of \textit{Number Extraction}, \textit{Graph Generation}, zero-shot and few-shot, respectively.}
\label{tab:BQACOTLlama}
\end{table*}

\begin{table}
\begin{center}
    
\small
\begin{tabular}{|c|c|c|c|}
\hline
Method & GPT3.5 & Llama3 & GPT4 \\
\hline
\hline
Basic QA ZS & \HeatCell{0} & \HeatCell{0} & \HeatCell{20} \\
Basic QA FS & \HeatCell{0} & \HeatCell{0} & \HeatCell{0} \\
COT ZS & \HeatCell{9} & \HeatCell{47} & \HeatCell{65} \\
COT FS & \HeatCell{3} & \HeatCell{38} & \HeatCell{64} \\
\hline
PAL & \HeatCell{26} & \HeatCell{91} & \HeatCell{97} \\
PAL w/GG & \HeatCell{28} & \HeatCell{75} & \HeatCell{97} \\
PAL w/NE & \HeatCell{39} & \HeatCell{96} & \HeatCell{96} \\
PAL w/NE w/GG & \HeatCell{41} & \HeatCell{98} & \HeatCell{97} \\
\hline
Monte Carlo & \HeatCell{75} & \HeatCell{96} & \HeatCell{98} \\
Monte Carlo w/GG & \HeatCell{75} & \HeatCell{95} & \HeatCell{97} \\
Monte Carlo w/NE & \HeatCell{63} & \HeatCell{90} & \HeatCell{98} \\
Monte Carlo w/NE W/G & \HeatCell{70} & \HeatCell{91} & \HeatCell{97} \\
\hline
ProbLog & \HeatCell{71} & \HeatCell{84} & \HeatCell{97} \\
\hline
\end{tabular}
\caption{Complete results on the adapted version of CLADDER dataset. w/NE and w/GG stand for with \textit{Number Extraction} and with \textit{Graph Generation}.}
\label{tab:compeleteCLADDER}
\end{center}
\end{table}

\begin{table*}
\begin{center}
\small
\begin{tabular}{|c|c|c|c|c|c|c|c|c|c|c|c|c|}
\hline
Method & $V_2$ & $V_3$ & $V_4$ & $V_5$ & $V_6$ & $V_7$ & $V_8$ & $V_9$ & $V_{10}$ & $V_{2-5}$ & $V_{6-10}$ & $V_{2-10}$ \\

\hline
\hline

Basic QA ZS  & \HeatCell{33} & \HeatCell{13} & \HeatCell{5} & \HeatCell{4} & \HeatCell{6} & \HeatCell{2} & \HeatCell{3} & \HeatCell{1} & \HeatCell{2} & \HeatCell{13} & \HeatCell{2} & \HeatCell{7}  \\

COT ZS  & \HeatCell{53} & \HeatCell{8} & \HeatCell{4} & \HeatCell{5} & \HeatCell{10} & \HeatCell{5} & \HeatCell{2} & \HeatCell{2} & \HeatCell{0} & \HeatCell{17} & \HeatCell{3} & \HeatCell{9}  \\
COT FS  & \HeatCell{52} & \HeatCell{23} & \HeatCell{12} & \HeatCell{5} & \HeatCell{8} & \HeatCell{4} & \HeatCell{1} & \HeatCell{4} & \HeatCell{2} & \HeatCell{23} & \HeatCell{3} & \HeatCell{12}  \\
PAL & \HeatCell{66} & \HeatCell{34} & \HeatCell{25} & \HeatCell{17} & \HeatCell{14} & \HeatCell{9} & \HeatCell{6} & \HeatCell{5} & \HeatCell{2} & \HeatCell{35} & \HeatCell{7} & \HeatCell{19}  \\
PAL w/NE  & \HeatCell{85} & \HeatCell{66} & \HeatCell{41} & \HeatCell{27} & \HeatCell{19} & \HeatCell{12} & \HeatCell{5} & \HeatCell{3} & \HeatCell{6} & \HeatCell{54} & \HeatCell{9} & \HeatCell{29}  \\
Monte Carlo  & \HeatCell{79} & \HeatCell{63} & \HeatCell{71} & \HeatCell{65} & \HeatCell{41} & \HeatCell{32} & \HeatCell{33} & \HeatCell{18} & \HeatCell{14} & \HeatCell{69} & \HeatCell{27} & \HeatCell{46}  \\
Monte Carlo w/GG  & \HeatCell{85} & \HeatCell{82} & \HeatCell{83} & \HeatCell{68} & \HeatCell{42} & \HeatCell{31} & \HeatCell{28} & \HeatCell{18} & \HeatCell{8} & \HeatCell{79} & \HeatCell{25} & \HeatCell{49}  \\
ProbLog  & \HeatCell{87} & \HeatCell{82} & \HeatCell{88} & \HeatCell{75} & \HeatCell{59} & \HeatCell{52} & \HeatCell{46} & \HeatCell{38} & \HeatCell{35} & \HeatCell{83} & \HeatCell{46} & \HeatCell{62}  \\

\hline
\hline

Basic QA FS  & \HeatCell{3} & \HeatCell{0} & \HeatCell{1} & \HeatCell{1} & \HeatCell{2} & \HeatCell{2} & \HeatCell{1} & \HeatCell{1} & \HeatCell{0} & \HeatCell{1} & \HeatCell{1} & \HeatCell{1}  \\
PAL w/NE w/GG  & \HeatCell{82} & \HeatCell{56} & \HeatCell{39} & \HeatCell{30} & \HeatCell{20} & \HeatCell{11} & \HeatCell{3} & \HeatCell{2} & \HeatCell{4} & \HeatCell{51} & \HeatCell{7} & \HeatCell{27}  \\
Monte Carlo w/NE  & \HeatCell{96} & \HeatCell{83} & \HeatCell{66} & \HeatCell{48} & \HeatCell{30} & \HeatCell{21} & \HeatCell{11} & \HeatCell{10} & \HeatCell{3} & \HeatCell{73} & \HeatCell{15} & \HeatCell{40}  \\
Monte Carlo w/NE w/GG  & \HeatCell{96} & \HeatCell{88} & \HeatCell{67} & \HeatCell{55} & \HeatCell{37} & \HeatCell{31} & \HeatCell{15} & \HeatCell{9} & \HeatCell{10} & \HeatCell{76} & \HeatCell{20} & \HeatCell{45}  \\
ProbLog w/NE & \HeatCell{85} & \HeatCell{87} & \HeatCell{40} & \HeatCell{32} & \HeatCell{27} & \HeatCell{35} & \HeatCell{25} & \HeatCell{23} & \HeatCell{12} & \HeatCell{61} & \HeatCell{24} & \HeatCell{40}  \\
ProbLog w/NE w/GG & \HeatCell{89} & \HeatCell{87} & \HeatCell{49} & \HeatCell{40} & \HeatCell{28} & \HeatCell{25} & \HeatCell{15} & \HeatCell{20} & \HeatCell{16} & \HeatCell{66} & \HeatCell{20} & \HeatCell{41}  \\

\hline
\end{tabular}
\end{center}
\caption{All the methods and configurations tested on GPT3.5 in the order that they improved the model, with the unhelpful configurations separated at the bottom. w/NE and w/GG stand for with \textit{Number Extraction} and with \textit{Graph Generation}.}
\label{tab:ALLGPT3Results}
\end{table*}

\begin{table*}
\begin{center}
\small
\begin{tabular}{|c|c|c|c|c|c|c|c|c|c|c|c|}
\hline
Model & Subtask & Method  & $V_2$ & $V_3$ & $V_4$ & $V_5$ & $V_6$ & $V_7$ & $V_8$ & $V_9$ & $V_{10}$ \\
\hline
\multirow{ 16}{*}{GPT3.5}& NE & NE & \HeatCell{100} & \HeatCell{100} & \HeatCell{100} & \HeatCell{100} & \HeatCell{96} & \HeatCell{95} & \HeatCell{98} & \HeatCell{94} & \HeatCell{94} \\
\cline{2-12}
&GG & GG & \HeatCell{100} & \HeatCell{95} & \HeatCell{92} & \HeatCell{93} & \HeatCell{84} & \HeatCell{75} & \HeatCell{79} & \HeatCell{73} & \HeatCell{78} \\
\cline{2-12}
&NE & NE then GG & \HeatCell{100} & \HeatCell{100} & \HeatCell{100} & \HeatCell{100} & \HeatCell{97} & \HeatCell{95} & \HeatCell{95} & \HeatCell{94} & \HeatCell{95} \\
&GG & NE then GG & \HeatCell{100} & \HeatCell{100} & \HeatCell{99} & \HeatCell{96} & \HeatCell{92} & \HeatCell{87} & \HeatCell{78} & \HeatCell{70} & \HeatCell{76} \\
\cline{2-12}
&NE & GG then NE & \HeatCell{100} & \HeatCell{98} & \HeatCell{99} & \HeatCell{98} & \HeatCell{98} & \HeatCell{95} & \HeatCell{95} & \HeatCell{89} & \HeatCell{84} \\
&GG & GG and NE & \HeatCell{100} & \HeatCell{97} & \HeatCell{100} & \HeatCell{96} & \HeatCell{92} & \HeatCell{87} & \HeatCell{87} & \HeatCell{77} & \HeatCell{82} \\
\cline{2-12}
\cline{2-12}
&NE & PAL w/NE & \HeatCell{98} & \HeatCell{93} & \HeatCell{66} & \HeatCell{58} & \HeatCell{36} & \HeatCell{36} & \HeatCell{32} & \HeatCell{17} & \HeatCell{17} \\
&NE & PAL w/NE w/GG & \HeatCell{88} & \HeatCell{90} & \HeatCell{54} & \HeatCell{52} & \HeatCell{31} & \HeatCell{31} & \HeatCell{19} & \HeatCell{12} & \HeatCell{8} \\
&GG & PAL w/NE w/GG & \HeatCell{98} & \HeatCell{100} & \HeatCell{97} & \HeatCell{83} & \HeatCell{69} & \HeatCell{64} & \HeatCell{44} & \HeatCell{26} & \HeatCell{18} \\
\cline{2-12}
&NE & Monte Carlo w/NE & \HeatCell{97} & \HeatCell{94} & \HeatCell{79} & \HeatCell{61} & \HeatCell{46} & \HeatCell{43} & \HeatCell{35} & \HeatCell{29} & \HeatCell{20} \\
&GG & Monte Carlo w/GG & \HeatCell{100} & \HeatCell{98} & \HeatCell{97} & \HeatCell{79} & \HeatCell{61} & \HeatCell{64} & \HeatCell{64} & \HeatCell{58} & \HeatCell{56} \\
&NE & Monte Carlo w/NE w/GG & \HeatCell{95} & \HeatCell{96} & \HeatCell{79} & \HeatCell{61} & \HeatCell{53} & \HeatCell{49} & \HeatCell{38} & \HeatCell{32} & \HeatCell{25} \\
&GG & Monte Carlo w/NE w/GG & \HeatCell{100} & \HeatCell{99} & \HeatCell{96} & \HeatCell{79} & \HeatCell{73} & \HeatCell{63} & \HeatCell{47} & \HeatCell{30} & \HeatCell{21} \\
\cline{2-12}
&NE & ProbLog w/NE & \HeatCell{93} & \HeatCell{96} & \HeatCell{65} & \HeatCell{50} & \HeatCell{38} & \HeatCell{42} & \HeatCell{29} & \HeatCell{24} & \HeatCell{24} \\
&NE & ProbLog w/NE w/GG & \HeatCell{86} & \HeatCell{94} & \HeatCell{58} & \HeatCell{46} & \HeatCell{31} & \HeatCell{32} & \HeatCell{17} & \HeatCell{18} & \HeatCell{13} \\
&GG & ProbLog w/NE w/GG & \HeatCell{100} & \HeatCell{100} & \HeatCell{100} & \HeatCell{85} & \HeatCell{80} & \HeatCell{70} & \HeatCell{60} & \HeatCell{33} & \HeatCell{31} \\

\hline
\hline

\multirow{ 2}{*}{Llama3}& NE & PAL w/NE & \HeatCell{100} & \HeatCell{100} & \HeatCell{99} & \HeatCell{100} & \HeatCell{100} & \HeatCell{100} & \HeatCell{95} & \HeatCell{93} & \HeatCell{97} \\

&GG & Monte Carlo w/GG & \HeatCell{100} & \HeatCell{98} & \HeatCell{97} & \HeatCell{79} & \HeatCell{61} & \HeatCell{64} & \HeatCell{64} & \HeatCell{58} & \HeatCell{56} \\
\hline
\end{tabular}
\caption{Graph Generation (GG) and Number Extraction (NE) subtask accuraices when used in the methods. At the top of the table, NE and GG are evaluated in isolation and then combined together. At the bottom of the table, they are evaluated when used within our methods.}
\label{tab:compeleteNEGG}
\end{center}
\end{table*}

\begin{table*}
\begin{center}
\small
\begin{tabular}{|c|c|c|c|c|c|c|c|c|c|}
\hline
Method & $V_2$ & $V_3$ & $V_4$ & $V_5$ & $V_6$ & $V_7$ & $V_8$ & $V_9$ & $V_{10}$ \\
\hline
PAL & \HeatCell{100} & \HeatCell{97} & \HeatCell{97} & \HeatCell{97} & \HeatCell{95} & \HeatCell{94} & \HeatCell{94} & \HeatCell{93} & \HeatCell{90} \\
PAL w/NE & \HeatCell{100} & \HeatCell{100} & \HeatCell{100} & \HeatCell{98} & \HeatCell{96} & \HeatCell{90} & \HeatCell{91} & \HeatCell{89} & \HeatCell{88} \\
PAL w/NE w/GG & \HeatCell{99} & \HeatCell{100} & \HeatCell{99} & \HeatCell{93} & \HeatCell{93} & \HeatCell{90} & \HeatCell{81} & \HeatCell{87} & \HeatCell{78} \\
\hline
Monte Carlo & \HeatCell{100} & \HeatCell{100} & \HeatCell{98} & \HeatCell{100} & \HeatCell{96} & \HeatCell{96} & \HeatCell{94} & \HeatCell{89} & \HeatCell{79} \\
Monte Carlo w/NE & \HeatCell{97} & \HeatCell{100} & \HeatCell{99} & \HeatCell{95} & \HeatCell{92} & \HeatCell{85} & \HeatCell{77} & \HeatCell{66} & \HeatCell{59} \\
Monte Carlo w/GG & \HeatCell{100} & \HeatCell{100} & \HeatCell{100} & \HeatCell{100} & \HeatCell{93} & \HeatCell{83} & \HeatCell{74} & \HeatCell{69} & \HeatCell{67} \\
Monte Carlo w/NE W/G & \HeatCell{100} & \HeatCell{100} & \HeatCell{98} & \HeatCell{97} & \HeatCell{92} & \HeatCell{90} & \HeatCell{86} & \HeatCell{66} & \HeatCell{69} \\
\hline
ProbLog & \HeatCell{99} & \HeatCell{98} & \HeatCell{94} & \HeatCell{90} & \HeatCell{87} & \HeatCell{79} & \HeatCell{80} & \HeatCell{72} & \HeatCell{62} \\
ProbLog w/NE & \HeatCell{97} & \HeatCell{97} & \HeatCell{50} & \HeatCell{35} & \HeatCell{29} & \HeatCell{37} & \HeatCell{30} & \HeatCell{28} & \HeatCell{15} \\
ProbLog w/NE w/GG & \HeatCell{94} & \HeatCell{90} & \HeatCell{54} & \HeatCell{48} & \HeatCell{32} & \HeatCell{29} & \HeatCell{22} & \HeatCell{23} & \HeatCell{18} \\

\hline
\end{tabular}
\caption{GPT3.5 success rate of code execution with different methods and number of variables. These results only consider the syntax accuracy of a code not its solution accuracy. w/NE and w/GG stand for with \textit{Number Extraction} and with \textit{Graph Generation}.}
\label{tab:ALLerros}
\end{center}
\end{table*}

\subsection{All Configuration Results}
\label{appendix:gpt3NEGGALL}
We tested many configurations of \textit{Number Extraction} and \textit{Graph Generation} in combination with our three main PAL, Monte Carlo, and ProbLog methods. Table~\ref{tab:ALLGPT3Results} shows all the methods and configurations tested on GPT3.5 in the order that they improved the model, with the unhelpful configurations separated at the bottom. The methods that were not helpful include 1) Basic QA FS, which, as we discussed before, seems only to confuse the model. 2) Monte Carlo and ProbLog with w/NE, which is not helpful since they do not add any meaningful contribution to the particular problem while they lengthen the solution unnecessarily. 3) Any method with both subtasks included in the prompt, results in a very long solution that confuses the LLM. For Llama3, we did not try every possible configuration based on our findings from GPT3.5. However, Monte Carlo w/GG, which underperformed in Llama3, is shown in Table~\ref{tab:BQACOTLlama}.

\subsection{Accuracy Analysis of subtasks}
\label{appendix:NEGG}
Another avenue that we extensively investigated was the accuracy of subtasks when utilized in our methods. As we mentioned before, the accuracy of subtasks varies compared to when they are generated before the main solution within the same prompt. Their accuracy with the combination of our methods is shown in Table~\ref{tab:compeleteNEGG}. The first column determines the model used, which is either GPT3.5 (for most of the tests) or Llama3. The second column, ``Subtask'', determines the task on which we report our accuracy: NE (Number Extraction) or GG (Graph Generation). The ``Method'' column shows the configuration in which the task is evaluated, and the rest of the columns show the number of variables in our splits $V_i$s. As shown in this table, the accuracy of both subtasks drops dramatically as the number of variables increases. It is worth noting that in most of the wrong examples, only one edge or number is incorrect. However, it is still important for the LLM to extract everything correctly since our datasets require all the numbers for inference. Even one incorrect number will lead to an incorrect result. When we compare the \textit{Number Extraction} accuracy with the main task accuracy, we see that \textit{Number Extraction} accuracy imposes an upper limit on the accuracy of the main task. Another thing we observed was that when the number of variables becomes too large, the LLM will not follow the format correctly, leading to many problems with larger variables.

\subsection{Adapted CLADDER Dataset}
\label{appendix:AdaptedCLADDERDataset}
We tested all our methods and configurations on the adapted version of the CLADDER dataset. These results are shown in Table~\ref{tab:compeleteCLADDER}. GPT3.5 quickly reaches 97\% accuracy, which shows the need to create more challenging datasets in this field, such as BLInD. Surprisingly, PAL w/NE w/GG results in a slight improvement in GPT3.5, which shows that if the solution length becomes smaller, additional information becomes more straightforward for LLM to utilize. Still, the improvements are minor as these subtasks become more useful at with larger variables which CLADDER does not have. Llama3 ProbLog still outperforms for syntax issues.

\section{Appendix D: Code Generation errors}
\label{appendix:codeerror}
We discussed earlier GPT3.5's struggle with coding and the coding-specific challenges that are introduced when we utilize our main methods. Table~\ref{tab:ALLerros} shows the success rate of code execution of GPT3.5 for each method and dataset variable size. It is no surprise that PAL is the easiest to implement and, as a result, has the highest success rate. However, when it comes to Monte Carlo and ProbLog, we face a trade-off. Monte Carlo requires long code solutions that need precise order of ``if nesting'' to simulate a sampling function. On the other hand, ProbLog is easier to write but must be generated based on three in-context examples that explain the syntax and solution. Table~\ref{tab:ALLerros} shows that GPT3.5 struggles with ProbLog more regarding code generation between these two methods. This introduces another challenge with GPT3.5 that is not present with GPT4. ProbLog may be easier to write, but GPT3.5 is unfamiliar with it and struggles to write the correct code.

Llama3, on the other hand, achieves near-perfect (100\%) syntax accuracy on Python code, which reflects the data it is trained on. However, as shown in Table~\ref{tab:Llama3ProbLogALLerros}, Llama3's accuracy of ProbLog syntax is not the same. Some of these accuracies, such as $V_9$ and $V_{10}$, are the same as the accuracy of the main problem solution, implying that Llama3's lack of performance was mainly due to syntax errors.

\begin{table*}
\begin{center}
\small
\begin{tabular}{|c|c|c|c|c|c|c|c|c|c|}
\hline
Method & $V_2$ & $V_3$ & $V_4$ & $V_5$ & $V_6$ & $V_7$ & $V_8$ & $V_9$ & $V_{10}$ \\
\hline
ProbLog & 90 & 95 & 92 & 87 & 95 & 94 & 88 & 82 & 77 \\

\hline
\end{tabular}
\caption{Llama3 success rate of code execution for ProbLog as percentages.}
\label{tab:Llama3ProbLogALLerros}
\end{center}
\end{table*}

\section{Appendix E: Prompts}
\label{appendix:allrawPrompts}

\begin{figure}
    \centering
    \includegraphics[width=1.0\linewidth]{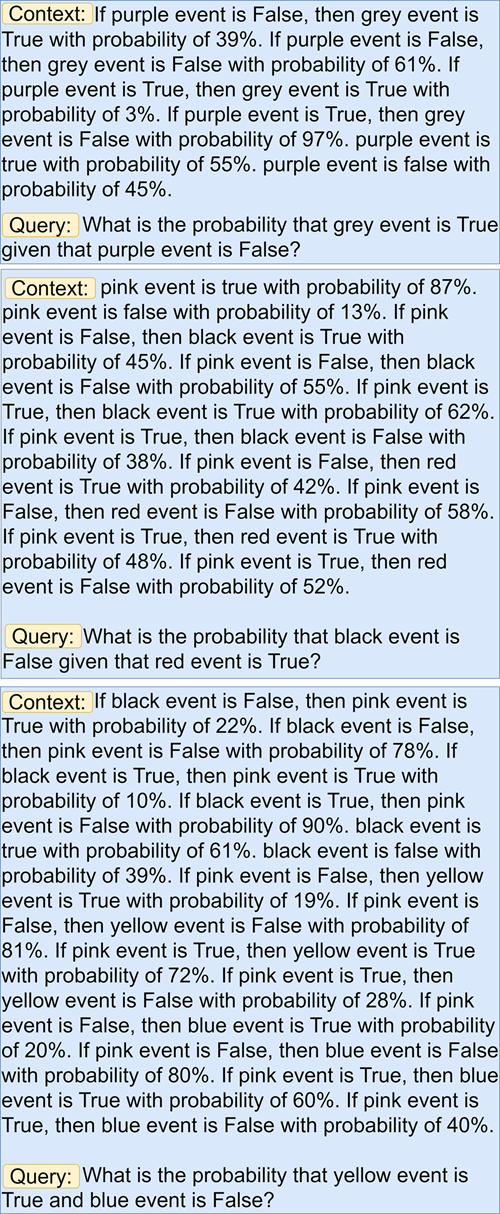}
    \caption{Contexts and Queries of in-context examples that were used in our prompts.}
    \label{fig:fsexamples}
\end{figure}

\begin{figure}
    \centering
    \includegraphics[width=0.8\linewidth]{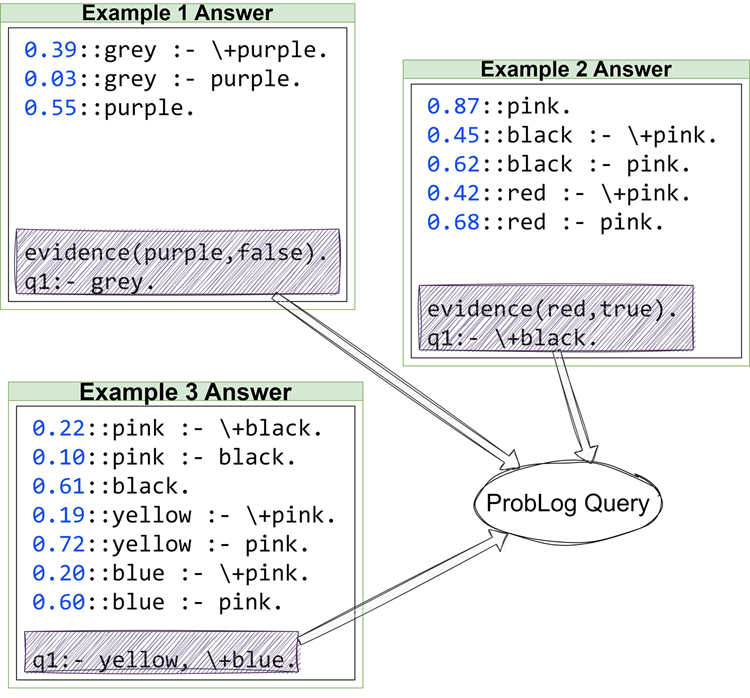}
    \caption{ProbLog answers to the in-context examples and their ``ProbLog Query''.}
    \label{fig:ProbLoganswers}
\end{figure}

In this section, we outline all the full prompts that were used in our LLMs for our selected examples, shown in Figure~\ref{fig:fsexamples}. This includes instructions, context, queries and answer for our methods Basic QA, COT , PAL , Monte Carlo and ProbLog. We also included the Prompts for our subtasks Number Extraction and Graph Generation which include instructions, contexts and answers ( query doesn't apply here). All our models (including open-source models) use the same instruction and examples. This prompts may require changes depending on whether we use subtasks without methods. For example if we don't include NE with PAL method, the variables in the answer will be replaced by actual numbers that appear in the context. If a subtask is used with a method, their instruction would also be merged.

The only time we changed the instruction for a specific model is Basic QA instruction for the Mistral method which did not follow the instruction in the zero-shot setting. In this scenario the Prompt changed to ``<s>[INST] You solve the following probabilistic question and generate the probability of the answer by only providing a number from 00 to 99 without any explanation. [/INST] [INST] Example : what is the probability of a coin flip?[/INST] 50 ''.

ProbLog needs special attention, given that the LLM may not be familiar with the programming language. In designing the ProbLog in-context prompts, selecting the examples to cover all the syntax and structure needed to write the ProbLog code for the rest of the examples is essential. Figure~\ref{fig:ProbLoganswers} shows the ProbLog answers to the in-context examples shown in Figure~\ref{fig:fsexamples} and their ``ProbLog Query''. These three ``ProbLog Queries'' showcase how to define a query for a target event (Examples 1 and 2), evidence event (Examples 1 and 2), and how to combine target events (Example 3). Also, in all the examples, multiple definitions of probabilities are given in the context that the LLM can emulate to write its code.

\subsection{Basic QA}
\label{appendix:EBQA}
\textbf{Instruction} You solve the following probabilistic question and generate the probability of the answer by only providing a number from 00 to 99 without any explanation.

\subsubsection{In-context Example 1 of Basic QA}
\noindent \textbf{Context} If purple event is False, then grey event is True with probability of 39\%. If purple event is False, then grey event is False with probability of 61\%. If purple event is True, then grey event is True with probability of 3\%. If purple event is True, then grey event is False with probability of 97\%. purple event is true with probability of 55\%. purple event is false with probability of 45\%.

\noindent \textbf{Question} What is the probability that grey event is True given that purple event is False?

\noindent \textbf{Answer} 39

\subsubsection{In-context Example 2 of Basic QA}
\noindent \textbf{Context} pink event is true with probability of 87\%. pink event is false with probability of 13\%. If pink event is False, then black event is True with probability of 45\%. If pink event is False, then black event is False with probability of 55\%. If pink event is True, then black event is True with probability of 62\%. If pink event is True, then black event is False with probability of 38\%. If pink event is False, then red event is True with probability of 42\%. If pink event is False, then red event is False with probability of 58\%. If pink event is True, then red event is True with probability of 48\%. If pink event is True, then red event is False with probability of 52\%.

\noindent \textbf{Question} What is the probability that black event is False given that red event is True?

\noindent \textbf{Answer} 40

\subsubsection{In-context Example 3 of Basic QA}
\noindent \textbf{Context} If black event is False, then pink event is True with probability of 22\%. If black event is False, then pink event is False with probability of 78\%. If black event is True, then pink event is True with probability of 10\%. If black event is True, then pink event is False with probability of 90\%. black event is true with probability of 61\%. black event is false with probability of 39\%. If pink event is False, then yellow event is True with probability of 19\%. If pink event is False, then yellow event is False with probability of 81\%. If pink event is True, then yellow event is True with probability of 72\%. If pink event is True, then yellow event is False with probability of 28\%. If pink event is False, then blue event is True with probability of 20\%. If pink event is False, then blue event is False with probability of 80\%. If pink event is True, then blue event is True with probability of 60\%. If pink event is True, then blue event is False with probability of 40\%.

\noindent \textbf{Question} What is the probability that yellow event is True given that blue event is False?

\noindent \textbf{Answer} 23

\subsection{COT}
\label{appendix:ECOT}
\textbf{Instruction} You solve the following probabilistic question and provide a detailed explanation of your solution and mathematical reasoning.
At the end of your answer, write a sentence that will give your final answer rounded to its closest integer, as such: The answer to this probabilistic question rounded to the closest integer is 71\%.

\subsubsection{In-context Example 1 of COT}
\noindent \textbf{Context} If purple event is False, then grey event is True with probability of 39\%. If purple event is False, then grey event is False with probability of 61\%. If purple event is True, then grey event is True with probability of 3\%. If purple event is True, then grey event is False with probability of 97\%. purple event is true with probability of 55\%. purple event is false with probability of 45\%.

\noindent \textbf{Question} What is the probability that grey event is True given that purple event is False?

\noindent \textbf{Answer} This probability is given to us in the context and as a result if final answer is 39\%.

\subsubsection{In-context Example 2 of COT}
\noindent \textbf{Context} pink event is true with probability of 87\%. pink event is false with probability of 13\%. If pink event is False, then black event is True with probability of 45\%. If pink event is False, then black event is False with probability of 55\%. If pink event is True, then black event is True with probability of 62\%. If pink event is True, then black event is False with probability of 38\%. If pink event is False, then red event is True with probability of 42\%. If pink event is False, then red event is False with probability of 58\%. If pink event is True, then red event is True with probability of 48\%. If pink event is True, then red event is False with probability of 52\%.

\noindent \textbf{Question} What is the probability that black event is False given that red event is True?

\noindent \textbf{Answer} To find: $P(\text{Black}=\text{False} \mid \text{Red}=\text{True})$

\noindent According to Bayes' Theorem, we have:
\begin{align*}
    P(&\text{Black}=\text{False} \mid \text{Red}=\text{True}) \\
    &= \frac{P(\text{Black}=\text{False} \text{ and } \text{Red}=\text{True})}{P(\text{Red}=\text{True})}.
\end{align*}

\noindent Calculate $P(\text{Black}=\text{False} \text{ and } \text{Red}=\text{True})$:
\begin{align*}
    P(&\text{Black}=\text{False} \text{ and } \text{Red}=\text{True}) \\
    &= P(\text{Black}=\text{False} \mid \text{Pink}=\text{True}) \\
    &\quad \times P(\text{Red}=\text{True} \mid \text{Pink}=\text{True}) \\
    &\quad \times P(\text{Pink}=\text{True}) \\
    &\quad + P(\text{Black}=\text{False} \mid \text{Pink}=\text{False}) \\
    &\quad \times P(\text{Red}=\text{True} \mid \text{Pink}=\text{False}) \\
    &\quad \times P(\text{Pink}=\text{False}) \\
    &= (0.38 \cdot 0.48 \cdot 0.87) \\
    &\quad + (0.55 \cdot 0.42 \cdot 0.13) \\
    &= 0.188718.
\end{align*}

\noindent Calculate $P(\text{Red}=\text{True})$:
\begin{align*}
    P(&\text{Red}=\text{True}) = \\
    &\quad P(\text{Red}=\text{True} \mid \text{Pink}=\text{True})\\
    &\quad \times P(\text{Pink}=\text{True})\\
    &\quad + P(\text{Red}=\text{True} \mid \text{Pink}=\text{False})\\
    &\quad \times P(\text{Pink}=\text{False}) \\
    &= (0.48 \cdot 0.87) + (0.42 \cdot 0.13) \\
    &= 0.4722.
\end{align*}

\noindent Calculate $P(\text{Black}=\text{False} \mid \text{Red}=\text{True})$:
\begin{align*}
    P(&\text{Black}=\text{False} \mid \text{Red}=\text{True}) \\
    &= \frac{0.188718}{0.4722} \\
    &= 0.3996.
\end{align*}

\noindent Thus, the final answer is approximately 40\%.

\subsubsection{In-context Example 3 of COT}
\noindent \textbf{Context} If black event is False, then pink event is True with probability of 22\%. If black event is False, then pink event is False with probability of 78\%. If black event is True, then pink event is True with probability of 10\%. If black event is True, then pink event is False with probability of 90\%. black event is true with probability of 61\%. black event is false with probability of 39\%. If pink event is False, then yellow event is True with probability of 19\%. If pink event is False, then yellow event is False with probability of 81\%. If pink event is True, then yellow event is True with probability of 72\%. If pink event is True, then yellow event is False with probability of 28\%. If pink event is False, then blue event is True with probability of 20\%. If pink event is False, then blue event is False with probability of 80\%. If pink event is True, then blue event is True with probability of 60\%. If pink event is True, then blue event is False with probability of 40\%.

\noindent \textbf{Question} What is the probability that yellow event is True given that blue event is False?

\noindent \textbf{Answer} 
\noindent We need to calculate the probability of Yellow event being True given that Blue event is False.

\begin{align*}
    P(&\text{Yellow}=\text{True} \mid \text{Blue}=\text{False}) =\\ 
    &\quad \frac{P(\text{Yellow}=\text{True} \text{and} \text{Blue}=\text{False})}{P(\text{Blue}=\text{False})}
\end{align*}

\noindent First, calculate $P(\text{Blue}=\text{False})$
\begin{align*}
P(&\text{Blue}=\text{False}) =\\
&\quad P(\text{Blue}=\text{False} \mid \text{Pink}=\text{True})\\
&\quad \times P(\text{Pink}=\text{True})\\
&\quad + P(\text{Blue}=\text{False} \mid \text{Pink}=\text{False})\\
&\quad \times P(\text{Pink}=\text{False})
\end{align*}

\noindent We need the probability of $P(\text{Pink}=\text{True})$ and $P(\text{Pink}=\text{False})$

\begin{align*}
P(&\text{Pink}=\text{True}) = \\
&\quad P(\text{Pink}=\text{True} \mid \text{Black}=\text{True}) \\
&\quad \times P(\text{Black}=\text{True})\\
&\quad + P(\text{Pink}=\text{True} \mid \text{Black}=\text{False}) \\
&\quad \times P(\text{Black}=\text{False}) \\
&= (0.1 \cdot 0.61) + (0.22 \cdot 0.39) = 0.1468 \\
P(&\text{Pink}=\text{False}) =\\
&\quad 1 - P(\text{Pink}=\text{True}) = 0.8532
\end{align*}

\noindent Now can calculate $P(\text{Blue}=\text{False})$
\begin{align*}
P(&\text{Blue}=\text{False}) = \\
&\quad P(\text{Blue}=\text{False} \mid \text{Pink}=\text{True})\\
&\quad \times P(\text{Pink}=\text{True})\\
&\quad + P(\text{Blue}=\text{False} \mid \text{Pink}=\text{False})\\
&\quad \times P(\text{Pink}=\text{False})\\
&= (0.4 \cdot 0.1468) + (0.8 \cdot 0.8532) = 0.74128
\end{align*}

\noindent Next, calculate P(\text{Yellow}=\text{True} \text{and} \text{Blue}=\text{False})
\begin{align*}
P(&\text{Yellow}=\text{True} \text{and} \text{Blue}=\text{False}) =\\
&\quad P(\text{Yellow}=\text{True} \mid \text{Pink}=\text{True})\\
&\quad \times P(\text{Blue}=\text{False} \mid \text{Pink}=\text{True})\\
&\quad \times P(\text{Pink}=\text{True})\\
&\quad + P(\text{Yellow}=\text{True} \mid \text{Pink}=\text{False})\\
&\quad \times P(\text{Blue}=\text{False} \mid \text{Pink}=\text{False})\\
&\quad \times P(\text{Pink}=\text{False})\\
&= (0.72 \cdot 0.4 \cdot 0.1468) + (0.19 \cdot 0.8 \cdot 0.8532)\\
&= 0.1719648
\end{align*}

\noindent Finally, calculate $P(\text{Yellow}=\text{True} \mid \text{Blue}=\text{False})$
\begin{align*}
P(&\text{Yellow}=\text{True} \mid \text{Blue}=\text{False}) \\
& = \frac{0.1719648}{0.74128} = 0.2319
\end{align*}
\noindent So the final answer is approximately 23\%.

\subsection{PAL}
\label{appendix:EPAL}
\textbf{Instruction} Solve the probabilistic questions by only writing a Python code. In Calculations section, the solution process is done, including the code and the mathematical reasoning in the form of Python code comments that start with \#. In Answer section, the final answer is assigned to a variable named answer.

\subsubsection{In-context Example 1 of PAL}
\noindent \textbf{Context} If purple event is False, then grey event is True with probability of 39\%. If purple event is False, then grey event is False with probability of 61\%. If purple event is True, then grey event is True with probability of 3\%. If purple event is True, then grey event is False with probability of 97\%. purple event is true with probability of 55\%. purple event is false with probability of 45\%.

\noindent \textbf{Question} What is the probability that grey event is True given that purple event is False?

\noindent \textbf{Answer} 
\begin{lstlisting}[language=Python]
# To find: P(Grey=True|Purple=False)
# This is given to us in the context

# Answer
answer=prob_grey_true_given_purple_false
\end{lstlisting}

\subsubsection{In-context Example 2 of PAL}
\noindent \textbf{Context} pink event is true with probability of 87\%. pink event is false with probability of 13\%. If pink event is False, then black event is True with probability of 45\%. If pink event is False, then black event is False with probability of 55\%. If pink event is True, then black event is True with probability of 62\%. If pink event is True, then black event is False with probability of 38\%. If pink event is False, then red event is True with probability of 42\%. If pink event is False, then red event is False with probability of 58\%. If pink event is True, then red event is True with probability of 48\%. If pink event is True, then red event is False with probability of 52\%.

\noindent \textbf{Question} What is the probability that black event is False given that red event is True?

\noindent \textbf{Answer} 
\begin{lstlisting}[language=Python]
# To find: P(Black=False|Red=True)
# According to Bayes' Theorem:
# P(Black=False|Red=True) = P(Black=False and Red=True) / P(Red=True)

# First, we find P(Black=False and Red=True)
# P(Black=False and Red=True) = P(Black=False|Pink=True)*P(Red=True|Pink=True)*P(Pink=True) + P(Black=False|Pink=False)*P(Red=True|Pink=False)*P(Pink=False)

prob_black_false_and_red_true = (prob_black_false_given_pink_true * prob_red_true_given_pink_true * prob_pink_true) + (prob_black_false_given_pink_false * prob_red_true_given_pink_false * prob_pink_false)

# Next, we find P(Red=True)
# P(Red=True) = P(Red=True|Pink=True)*P(Pink=True) + P(Red=True|Pink=False)*P(Pink=False)

prob_red_true = (prob_red_true_given_pink_true * prob_pink_true) + (prob_red_true_given_pink_false * prob_pink_false)

# Now, we can calculate P(Black=False|Red=True):
prob_black_false_given_red_true = prob_black_false_and_red_true / prob_red_true

# Answer
answer=prob_black_false_given_red_true
\end{lstlisting}

\subsubsection{In-context Example 3 of PAL}
\noindent \textbf{Context} If black event is False, then pink event is True with probability of 22\%. If black event is False, then pink event is False with probability of 78\%. If black event is True, then pink event is True with probability of 10\%. If black event is True, then pink event is False with probability of 90\%. black event is true with probability of 61\%. black event is false with probability of 39\%. If pink event is False, then yellow event is True with probability of 19\%. If pink event is False, then yellow event is False with probability of 81\%. If pink event is True, then yellow event is True with probability of 72\%. If pink event is True, then yellow event is False with probability of 28\%. If pink event is False, then blue event is True with probability of 20\%. If pink event is False, then blue event is False with probability of 80\%. If pink event is True, then blue event is True with probability of 60\%. If pink event is True, then blue event is False with probability of 40\%.

\noindent \textbf{Question} What is the probability that yellow event is True given that blue event is False?

\noindent \textbf{Answer} 
\begin{lstlisting}[language=Python]
# We need to calculate the probability of Yellow event being True given that Blue event is False.
# P(Yellow=True | Blue=False) = P(Yellow=True and Blue=False) / P(Blue=False)

# First, calculate P(Blue=False)
# P(Blue=False) = P(Blue=False | Pink=True)*P(Pink=True) + P(Blue=False | Pink=False)*P(Pink=False)

prob_pink_True = (prob_pink_true_given_black_true * prob_black_true) + (prob_pink_true_given_black_false * prob_black_false)
prob_pink_false = 1 - prob_pink_true

prob_blue_false = (prob_blue_false_given_pink_true * prob_pink_true) + (prob_blue_false_given_pink_false * prob_pink_false)

# Next, calculate P(Yellow=True and Blue=False)
# P(Yellow=True and Blue=False) = P(Yellow=True | Pink=True)*P(Blue=False | Pink=True)*P(Pink=True) +
#                                  P(Yellow=True | Pink=False)*P(Blue=False | Pink=False)*P(Pink=False)
prob_yellow_true_and_blue_false = (prob_yellow_true_given_pink_true * prob_blue_false_given_pink_true * prob_pink_true) + (prob_yellow_true_given_pink_false * prob_blue_false_given_pink_false * prob_pink_false)

# Finally, calculate P(Yellow=True | Blue=False)
prob_yellow_true_given_blue_false = prob_yellow_true_and_blue_false / prob_blue_false

# Answer
answer= prob_yellow_true_given_blue_false
\end{lstlisting}

\subsection{Monte Carlo}
\label{appendix:EMC}
\textbf{Instruction} Solve the probabilistic questions writing a Python code that would solve the question with a Monte Carlo method. The final probability is assigned to a variable named answer. Follow the format no matter what.

\subsubsection{In-context Example 1 of Monte Carlo}
\noindent \textbf{Context} If purple event is False, then grey event is True with probability of 39\%. If purple event is False, then grey event is False with probability of 61\%. If purple event is True, then grey event is True with probability of 3\%. If purple event is True, then grey event is False with probability of 97\%. purple event is true with probability of 55\%. purple event is false with probability of 45\%.

\noindent \textbf{Question} What is the probability that grey event is True given that purple event is False?

\noindent \textbf{Answer} 
\begin{lstlisting}[language=Python]
import random
def simulate():
    purple = random.random() < prob_purple_true
    if purple:
        grey = random.random() < prob_grey_true_given_purple_true
    else:
        grey = random.random() < prob_grey_true_given_purple_false
    return purple, grey

grey_true_given_purple_false_count = 0
purple_false_count = 0
for _ in range(1000000):
    purple, grey = simulate()
    if not purple:
        purple_false_count += 1
        if grey:
            grey_true_given_purple_false_count += 1

answer = grey_true_given_purple_false_count / purple_false_count if purple_false_count > 0 else 0
\end{lstlisting}

\subsubsection{In-context Example 2 of Monte Carlo}
\noindent \textbf{Context} pink event is true with probability of 87\%. pink event is false with probability of 13\%. If pink event is False, then black event is True with probability of 45\%. If pink event is False, then black event is False with probability of 55\%. If pink event is True, then black event is True with probability of 62\%. If pink event is True, then black event is False with probability of 38\%. If pink event is False, then red event is True with probability of 42\%. If pink event is False, then red event is False with probability of 58\%. If pink event is True, then red event is True with probability of 48\%. If pink event is True, then red event is False with probability of 52\%.

\noindent \textbf{Question} What is the probability that black event is False given that red event is True?

\noindent \textbf{Answer} 
\begin{lstlisting}[language=Python]
import random

def simulate():
    pink = random.random() < prob_pink_true
    if pink:
        black = random.random() < prob_black_true_given_pink_true
        red = random.random() < prob_red_true_given_pink_true
    else:
        black = random.random() < prob_black_true_given_pink_false
        red = random.random() < prob_red_true_given_pink_false
    return pink, black, red

black_false_given_red_true_count = 0
red_true_count = 0
for _ in range(1000000):
    pink, black, red = simulate()
    if red:
        red_true_count += 1
        if not black:
            black_false_given_red_true_count += 1

answer = black_false_given_red_true_count / red_true_count if red_true_count > 0 else 0
\end{lstlisting}

\subsubsection{In-context Example 3 of Monte Carlo}
\noindent \textbf{Context} If black event is False, then pink event is True with probability of 22\%. If black event is False, then pink event is False with probability of 78\%. If black event is True, then pink event is True with probability of 10\%. If black event is True, then pink event is False with probability of 90\%. black event is true with probability of 61\%. black event is false with probability of 39\%. If pink event is False, then yellow event is True with probability of 19\%. If pink event is False, then yellow event is False with probability of 81\%. If pink event is True, then yellow event is True with probability of 72\%. If pink event is True, then yellow event is False with probability of 28\%. If pink event is False, then blue event is True with probability of 20\%. If pink event is False, then blue event is False with probability of 80\%. If pink event is True, then blue event is True with probability of 60\%. If pink event is True, then blue event is False with probability of 40\%.

\noindent \textbf{Question} What is the probability that yellow event is True given that blue event is False?

\noindent \textbf{Answer} 
\begin{lstlisting}[language=Python]
import random
def simulate():
    black = random.random() < prob_black_true
    if black:
        pink = random.random() < prob_pink_true_given_black_true
    else:
        pink = random.random() < prob_pink_true_given_black_false
    if pink:
        yellow = random.random() < prob_yellow_true_given_pink_true
        blue = random.random() < prob_blue_true_given_pink_true
    else:
        yellow = random.random() < prob_yellow_true_given_pink_false
        blue = random.random() < prob_blue_true_given_pink_false
    return pink, yellow, blue

yellow_true_given_blue_false_count = 0
blue_false_count = 0
for _ in range(1000000):
    pink, yellow, blue = simulate()
    if not blue:
        blue_false_count += 1
        if yellow:
            yellow_true_given_blue_false_count += 1

answer = yellow_true_given_blue_false_count / blue_false_count if blue_false_count > 0 else 0
\end{lstlisting}

\subsection{ProbLog}
\label{appendix:EProbLog}
\textbf{Instruction} You solve probabilistic questions by writing a Problog code that represents the probabilities, and the query.

\subsubsection{In-context Example 1 of ProbLog}
\noindent \textbf{Context} If purple event is False, then grey event is True with probability of 39\%. If purple event is False, then grey event is False with probability of 61\%. If purple event is True, then grey event is True with probability of 3\%. If purple event is True, then grey event is False with probability of 97\%. purple event is true with probability of 55\%. purple event is false with probability of 45\%.

\noindent \textbf{Question} What is the probability that grey event is True given that purple event is False?

\noindent \textbf{Answer} 
\begin{lstlisting}[language=Python]
{prob_grey_true_given_purple_false}::grey :- \+purple.
{prob_grey_true_given_purple_true}::grey :- purple.
{prob_purple_true}::purple.

evidence(purple,false).
q1:- grey.
\end{lstlisting}

\subsubsection{In-context Example 2 of ProbLog}
\noindent \textbf{Context} pink event is true with probability of 87\%. pink event is false with probability of 13\%. If pink event is False, then black event is True with probability of 45\%. If pink event is False, then black event is False with probability of 55\%. If pink event is True, then black event is True with probability of 62\%. If pink event is True, then black event is False with probability of 38\%. If pink event is False, then red event is True with probability of 42\%. If pink event is False, then red event is False with probability of 58\%. If pink event is True, then red event is True with probability of 48\%. If pink event is True, then red event is False with probability of 52\%.

\noindent \textbf{Question} What is the probability that black event is False given that red event is True?

\noindent \textbf{Answer} 
\begin{lstlisting}[language=Python]
{prob_pink_true}::pink.
{prob_black_true_given_pink_false}::black :- \+pink.
{prob_black_true_given_pink_true}::black :- pink.
{prob_red_true_given_pink_false}::red :- \+pink.
{prob_red_true_given_pink_true}::red :- pink.

evidence(red,true).
q1:- \+black.
\end{lstlisting}

\subsubsection{In-context Example 3 of ProbLog}
\noindent \textbf{Context} If black event is False, then pink event is True with probability of 22\%. If black event is False, then pink event is False with probability of 78\%. If black event is True, then pink event is True with probability of 10\%. If black event is True, then pink event is False with probability of 90\%. black event is true with probability of 61\%. black event is false with probability of 39\%. If pink event is False, then yellow event is True with probability of 19\%. If pink event is False, then yellow event is False with probability of 81\%. If pink event is True, then yellow event is True with probability of 72\%. If pink event is True, then yellow event is False with probability of 28\%. If pink event is False, then blue event is True with probability of 20\%. If pink event is False, then blue event is False with probability of 80\%. If pink event is True, then blue event is True with probability of 60\%. If pink event is True, then blue event is False with probability of 40\%.

\noindent \textbf{Question} What is the probability that yellow event is True given that blue event is False?

\noindent \textbf{Answer} 
\begin{lstlisting}[language=Python]
{prob_pink_true_given_black_false}::pink :- \+black.
{prob_pink_true_given_black_true}::pink :- black.
{prob_black_true}::black.
{prob_yellow_true_given_pink_false}::yellow :- \+pink.
{prob_yellow_true_given_pink_true}::yellow :- pink.
{prob_blue_true_given_pink_false}::blue :- \+pink.
{prob_blue_true_given_pink_true}::blue :- pink.

q1:- yellow, \+blue.
\end{lstlisting}

\subsection{NE Subtask}
\label{appendix:ENE}
\textbf{Instruction} Extract probabilities in the context in the format of given examples.

\subsubsection{In-context Example 1 of NE}
\noindent \textbf{Context} If purple event is False, then grey event is True with probability of 39\%. If purple event is False, then grey event is False with probability of 61\%. If purple event is True, then grey event is True with probability of 3\%. If purple event is True, then grey event is False with probability of 97\%. purple event is true with probability of 55\%. purple event is false with probability of 45\%.

\noindent \textbf{Answer}
\begin{lstlisting}[language=Python]
prob_grey_true_given_purple_false = 0.39
prob_grey_false_given_purple_false = 0.61
prob_grey_true_given_purple_true = 0.03
prob_grey_false_given_purple_true = 0.97
prob_purple_true = 0.55
prob_purple_false = 0.45
\end{lstlisting}

\subsubsection{In-context Example 2 of NE}
\noindent \textbf{Context} pink event is true with probability of 87\%. pink event is false with probability of 13\%. If pink event is False, then black event is True with probability of 45\%. If pink event is False, then black event is False with probability of 55\%. If pink event is True, then black event is True with probability of 62\%. If pink event is True, then black event is False with probability of 38\%. If pink event is False, then red event is True with probability of 42\%. If pink event is False, then red event is False with probability of 58\%. If pink event is True, then red event is True with probability of 48\%. If pink event is True, then red event is False with probability of 52\%.

\noindent \textbf{Answer}
\begin{lstlisting}[language=Python]
prob_pink_true = 0.87
prob_pink_false = 0.13
prob_black_true_given_pink_false = 0.45
prob_black_false_given_pink_false = 0.55
prob_black_true_given_pink_true = 0.62
prob_black_false_given_pink_true = 0.38
prob_red_true_given_pink_false = 0.42
prob_red_false_given_pink_false = 0.58
prob_red_true_given_pink_true = 0.48
prob_red_false_given_pink_true = 0.52
\end{lstlisting}

\subsubsection{In-context Example 3 of NE}
\noindent \textbf{Context} If black event is False, then pink event is True with probability of 22\%. If black event is False, then pink event is False with probability of 78\%. If black event is True, then pink event is True with probability of 10\%. If black event is True, then pink event is False with probability of 90\%. black event is true with probability of 61\%. black event is false with probability of 39\%. If pink event is False, then yellow event is True with probability of 19\%. If pink event is False, then yellow event is False with probability of 81\%. If pink event is True, then yellow event is True with probability of 72\%. If pink event is True, then yellow event is False with probability of 28\%. If pink event is False, then blue event is True with probability of 20\%. If pink event is False, then blue event is False with probability of 80\%. If pink event is True, then blue event is True with probability of 60\%. If pink event is True, then blue event is False with probability of 40\%.

\noindent \textbf{Answer}
\begin{lstlisting}[language=Python]
prob_pink_true_given_black_false = 0.22
prob_pink_false_given_black_false = 0.78
prob_pink_true_given_black_true = 0.10
prob_pink_false_given_black_true = 0.90
prob_black_true = 0.61
prob_black_false = 0.39
prob_yellow_true_given_pink_false = 0.19
prob_yellow_false_given_pink_false = 0.81
prob_yellow_true_given_pink_true = 0.72
prob_yellow_false_given_pink_true = 0.28
prob_blue_true_given_pink_false = 0.20
prob_blue_false_given_pink_false = 0.80
prob_blue_true_given_pink_true = 0.60
prob_blue_false_given_pink_true = 0.40
\end{lstlisting}

\subsection{GG Subtask}
\label{appendix:EGG}
\textbf{Instruction} Extract the graph edges of the described Bayesian network in the context in the format of given examples.

\subsubsection{In-context Example 1 of GG}
\noindent \textbf{Context} If purple event is False, then grey event is True with probability of 39\%. If purple event is False, then grey event is False with probability of 61\%. If purple event is True, then grey event is True with probability of 3\%. If purple event is True, then grey event is False with probability of 97\%. purple event is true with probability of 55\%. purple event is false with probability of 45\%.

\noindent \textbf{Answer} purple event $\implies$ grey event

\subsubsection{In-context Example 2 of GG}
\noindent \textbf{Context} pink event is true with probability of 87\%. pink event is false with probability of 13\%. If pink event is False, then black event is True with probability of 45\%. If pink event is False, then black event is False with probability of 55\%. If pink event is True, then black event is True with probability of 62\%. If pink event is True, then black event is False with probability of 38\%. If pink event is False, then red event is True with probability of 42\%. If pink event is False, then red event is False with probability of 58\%. If pink event is True, then red event is True with probability of 48\%. If pink event is True, then red event is False with probability of 52\%.

\noindent \textbf{Answer} pink event $\implies$ black event, pink event $\implies$ red event

\subsubsection{In-context Example 3 of GG}
\noindent \textbf{Context} If black event is False, then pink event is True with probability of 22\%. If black event is False, then pink event is False with probability of 78\%. If black event is True, then pink event is True with probability of 10\%. If black event is True, then pink event is False with probability of 90\%. black event is true with probability of 61\%. black event is false with probability of 39\%. If pink event is False, then yellow event is True with probability of 19\%. If pink event is False, then yellow event is False with probability of 81\%. If pink event is True, then yellow event is True with probability of 72\%. If pink event is True, then yellow event is False with probability of 28\%. If pink event is False, then blue event is True with probability of 20\%. If pink event is False, then blue event is False with probability of 80\%. If pink event is True, then blue event is True with probability of 60\%. If pink event is True, then blue event is False with probability of 40\%.

\noindent \textbf{Answer} black event $\implies$ pink event, pink event $\implies$ yellow event, pink event $\implies$ blue event

\end{document}